\def\eqref#1{equation~\ref{#1}}
\def\1{\bm{1}}
\def\vx{{\bm{x}}}
\DeclareMathAlphabet{\mathsfit}{\encodingdefault}{\sfdefault}{m}{sl}
\SetMathAlphabet{\mathsfit}{bold}{\encodingdefault}{\sfdefault}{bx}{n}
\def\gS{{\mathcal{S}}}
\newcommand{\E}{\mathbb{E}}
\newcommand{\R}{\mathbb{R}}
\newcommand{\KL}{D_{\mathrm{KL}}}
\newtheorem{lemma}{Lemma}
\newtheorem{theorem}{Theorem}
\newcommand{\highlightedaddition}[1]{{\color{blue}#1}}
\newcommand{\opensourceurl}{is provided in the supplementary material and will be open-sourced upon publication}
\newcommand{\opensourcefootnote}{\footnote{A notebook reproducing all experiments in this paper \opensourceurl.}}
\newcommand{\listcontributions}{\subsubsection*{Author Contributions}
{\bf Vincent Dumoulin} wrote a first draft of the manuscript which co-authors then added to and helped edit. He implemented and carried out the experiments (starting from Yann's code and model checkpoints in the case of LM1B). He derived the proofs for \autoref{thm:reward-optimality} (with input from Laurent Dinh) and \autoref{thm:generalized-reward-optimality} (with input from Daniel) and developed the length-normalized Luce choice rule solution to the ``length bias'' problem jointly with Yann.

{\bf Daniel D. Johnson} identified the problem of annotator misspecification with two annotators and proposed the ``family of generative processes'' framing to unify the Luce choice rule, DPO, and the length-normalized Luce choice rule. He contributed to writing and editing, and helped with some of the proofs. He explored alternative variants of the LM1B experiment which ended up not making it into the paper.

{\bf Pablo Samuel Castro} provided regular input in the research discussions and helped write and edit the manuscript.

{\bf Hugo Larochelle} proposed the research direction that eventually led to the paper's narrative of framing learning from pairwise human preferences as a probabilistic modeling problem. He advised on the project and helped revise the manuscript.

{\bf Yann Dauphin} helped with experiments and trained the LM1B model checkpoints used in the submission. He proposed latent variables as a solution to user misspecification with two annotators (Section~\ref{sec:well-specified-annotator}). He identified the ``length bias'' issue with the Luce choice rule for autoregressive models and he developed the length-normalized Luce choice rule solution jointly with Vincent. He helped develop and frame the theoretical results in the paper.
}
\newcommand{\maybelistcontributions}{}
\newcommand{\acknowledge}{\subsubsection*{Acknowledgements}
We would like to thank Laurent Dinh for his help in deriving the proof for \autoref{thm:reward-optimality}; Jesse Engel for his input in the early stages of the project; Christian Walder for his input on inducing specific properties in globally optimal models by manipulating the model's generative process for preferences; Ben Poole for his thoughtful feedback in revising the manuscript; and Rishabh Joshi for clarifying the implementation details of SLiC-HF-direct.
}
\newcommand{\maybeacknowledge}{}
\title{A density estimation perspective on learning from pairwise human preferences}
\author{%
    \name Vincent Dumoulin\footnotemark[2], %
          Daniel D. Johnson\footnotemark[2]\protect\phantom{,}\footnotemark[3], %
          Pablo Samuel Castro\footnotemark[2]\protect\phantom{,}\footnotemark[4], %
          Hugo Larochelle\footnotemark[2]\protect\phantom{,}\footnotemark[4]\protect\phantom{,}\footnotemark[5], \\%
          Yann Dauphin\footnotemark[2] \\%
    \email \{vdumoulin,ddjohnson,psc,hugolarochelle,ynd\}@google.com \\
    \addr \footnotemark[2] Google DeepMind \\
          \footnotemark[3] University of Toronto \\
          \footnotemark[4] Mila, Universit\'{e} de Montr\'{e}al \\
          \footnotemark[5] CIFAR Fellow
}
\newenvironment{DANIEL-comment}
    {\begingroup\color{green!60!black}\textlbrackdbl\textbf{Daniel:}}
    {\textrbrackdbl\endgroup}
\newcommand\stateSpace{\mathcal{X}}
\newcommand\actionSpace{\mathcal{A}}
\newcommand\transitionFn{\mathcal{P}}
\newcommand\rewardFunction{\mathcal{R}}
\begin{document}

\maketitle

\begin{abstract}
Learning from human feedback (LHF)---and in particular learning from pairwise preferences---has recently become a crucial ingredient in training large language models (LLMs), and has been the subject of much research. Most recent works frame it as a reinforcement learning problem, where a reward function is learned from pairwise preference data and the LLM is treated as a policy which is adapted to maximize the rewards, often under additional regularization constraints. We propose an alternative interpretation which centers on the generative process for pairwise preferences and treats LHF as a density estimation problem. We provide theoretical and empirical results showing that for a family of generative processes defined via preference behavior distribution equations, training a reward function on pairwise preferences effectively models an annotator's implicit preference distribution. Finally, we discuss and present findings on ``annotator misspecification''---failure cases where wrong modeling assumptions are made about annotator behavior, resulting in poorly-adapted models---suggesting that approaches that learn from pairwise human preferences could have trouble learning from a population of annotators with diverse viewpoints.\opensourcefootnote
\end{abstract}

\section{Introduction}

With the recent surge of interest in large language models (LLMs), learning from human feedback (LHF) has received a great deal of attention from the machine learning community. Pretraining of large Transformer architectures~\citep{vaswani2017attention} on web-scale data has been essential to their success, but by itself the pretraining objective produces unconditional generative models that are difficult to control. A critical component in training LLMs is therefore to finetune them to produce ``good'' responses, which is typically formalized as aligning their outputs with human preferences.

The current dominant approach to aligning LLMs with human preferences relies on pairwise comparisons of model outputs, and frames the problem as a reinforcement learning (RL) problem where the LLM is treated as a policy. Preference-based learning of reward functions~\citep{sadigh2017active,biyik2018batch}---and more specifically preference-based RL or RL from human preferences~\citep[abbreviated as RLHF;][]{wirth2017survey,christiano2017deep,ibarz2018reward,brown2019extrapolating,brown2020better,lee2021pebble,shin2023benchmarks}---is used to finetune LLMs from model output pairs which have been ranked by annotators to reflect how preferable those outputs are~\citep{stiennon2020learning,ouyang2022training,bai2022training}.

However, interpreting preference-alignment as an RL problem requires a change of perspective from the original pretraining task. For LLM pretraining, the loss is derived from a probabilistic modeling formulation of the problem: The language model is treated as an {\em autoregressive probability distribution}, and next-token prediction is used to maximize the likelihood of human language samples. For human preference alignment, the language model is treated as a {\em policy} and the loss aims to maximize a (learned) reward which is reflective of human preferences. Given the probabilistic origins of training LLMs, it is worth considering whether certain probabilistic modeling tools and techniques are being overlooked when taking the RL perspective.

We argue that treating human preference alignment as a probabilistic modeling task forces us to be explicit about our assumptions on the {\em generative process} for annotator preferences and provides important insights into the properties and failure modes of algorithms that learn from them. Our main contributions are:

\begin{itemize}
    \item We show that the standard procedure for training a reward function on pairwise human preferences can be reinterpreted as performing density estimation on an annotator's {\em implicit preference distribution}, assuming annotators behave according to the {\em Luce choice rule} \citep{luce1963handbook};
    \item We generalize the Luce choice rule to a family of generative processes for pairwise preferences defined via {\em preference behavior distribution equations} (PBDEs) and provide global optimality results in the case where the annotator and model behave according to the same PBDE;
    \item Finally, we show that an unintentional mismatch in generative processes for pairwise preferences between the annotator and model (which we term {\em annotator misspecification}) can lead to badly-tuned policies, highlighting the importance of explicitly stating assumptions on annotator behavior.
\end{itemize}

\section{Background}
Here we provide some technical background and notation which will be used throughout the manuscript.

\subsection{Reinforcement Learning}

Reinforcement learning tackles sequential decision-making problems and is typically formulated as a Markov decision process $\mathcal{M}=\lbrace \stateSpace, \actionSpace, \transitionFn, \rewardFunction, \gamma \rbrace$ using the following definitions: $\stateSpace$ is the set of states; $\actionSpace$ is a set of actions (or ``one-step decisions''); $\transitionFn:\stateSpace\times\actionSpace\rightarrow\Delta(\stateSpace)$ defines the transition dynamics, where $\transitionFn (x, a)(x')$ is the probability of transitioning to state $x'$ after performing action $a$ from state $x$; $\rewardFunction:\stateSpace\times\actionSpace\rightarrow\mathbb{R}$ is the one-step reward, where $\rewardFunction(x, a)$ is the reward received after performing action $a$ from state $x$; $\gamma\in [0,1)$ is a discount factor that places more emphasis on rewards received in the short-term.

An agent's behaviour is formalized via a policy $\pi:\stateSpace\rightarrow\Delta(\actionSpace)$ which gives a distribution over possible actions conditioned on a state (e.g., $\pi(x)(a)$ is the probability given by policy $\pi$ of picking action $a$ when in state $x$). Given a fixed policy $\pi$, we can express rewards as a function of state: $r_{\pi}(x) := \mathbb{E}_{a\sim\pi(x)}\rewardFunction(x, a)$; this notation will prove useful when considering reward functions over LLM outputs. Every policy $\pi$ induces a {\em value function} given by the expected sum of discounted rewards when following policy $\pi$:
\begin{equation*}
    V^{\pi}(x) = \sum_{t=0}^{\infty}\bigg[
        \gamma^t \rewardFunction(x_t, a_t) \mid x_0 = x, a_t\sim\pi(x_t), x_{t+1}\sim\transitionFn(x_t, a_t)
    \bigg].
\end{equation*}
It is useful to also consider state-action value functions, which quantify the value of taking an action $a$ from state $x$, and following the policy $\pi$ afterwards: $Q^{\pi}(x, a) := \rewardFunction(x, a) + \gamma \mathbb{E}_{x'\sim\transitionFn(x, a)}V^{\pi}(x')$. Finally, we denote by $\mu_{\pi}\in\Delta(\stateSpace)$ the stationary state distribution induced by a policy $\pi$.

The goal of RL agents is then to find the optimal policy $\pi^*$, where optimality is defined uniformly across the state space (e.g. $V^{\pi^*} \geq V^{\pi}$ for every $\pi$). In most modern RL applications, policies are expressed by neural networks with parameters $\theta$, and we denote the respective policy as $\pi_{\theta}$. There are a number of approaches for tackling this problem \citep{sutton98rl}, but for the purposes of this work we focus on policy gradient methods. Specifically, the {\em policy gradient Theorem} asserts the following relationship:
\begin{equation*}
    \nabla V^{\pi_{\theta}} \propto \sum_{x\in\stateSpace}
        \mu_{\pi_{\theta}}(x)\sum_{a\in\actionSpace}Q^{\pi_{\theta}}(x, a)\nabla\pi_{\theta}(x)(a).
\end{equation*}
This result establishes that we can use gradient-based optimization methods to maximize returns by only requiring gradients of the parameterized policy $\pi_{\theta}$. Therefore we can consider the output of any neural network (such as an LLM) as a policy, and we can then update this policy so as to maximize some reward function of interest. This observation is what underlies RLHF, which we discuss in the next section.

\subsection{Reinforcement Learning from Human Feedback (RLHF)}

RLHF works in two stages: (1) a reward function is learned from the annotator preference data; and (2) the LLM is treated as an RL policy and finetuned to maximize the learned reward function. The two stages are usually alternated in multiple iterations of a closed loop where new annotator feedback is gathered between each iteration.

More formally, the annotator ranks pairs of model outputs $\vx_A$ and $\vx_B$, and the outcome can be represented with a binary variable $y$ that takes the value $y = 1$ if $\vx_A$ is preferred to $\vx_B$ (noted $\vx_A \succ \vx_B$) and $y = 0$ otherwise. In the context of language modeling, the reward function is usually defined as a parameterized mapping $r_\phi(\vx_0, \vx)$ from prompt $\vx_0$ and continuation $\vx$ to a scalar reward. To simplify the notation and treatment, we will omit $\vx_0$ and consider the unconditional generation case, with the understanding that the discussion extends trivially to the conditional generation case. The reward function $r_\phi(\vx)$ is trained on a dataset $D$ of annotator preferences:

\begin{equation}
\label{eqn:reward-function}
    \phi \gets \min_\phi \quad \E_{\vx_A, \vx_B, y \sim D}
    \bigg[
        -y \cdot \log p_\phi(y; \vx_A, \vx_B)
        -(1 - y) \cdot \log (1 - p_\phi(y; \vx_A, \vx_B))
    \bigg]
\end{equation}

where

\begin{equation}
\label{eqn:reward-function-p-y}
    p_\phi(y; \vx_A, \vx_B)
    = \sigma\Big(r_\phi(\vx_A) - r_\phi(\vx_B)\Big)
    = \frac{e^{r_\phi(\vx_A)}}{e^{r_\phi(\vx_A)} + e^{r_\phi(\vx_B)}}
\end{equation}

is the probability that $\vx_A \succ \vx_B$ under the Bradley-Terry model~\citep{bradley1952rank} for pairwise comparisons. In plain language, the reward function is trained using a binary cross-entropy loss on the comparison outcome variable $y$, which incentivizes increasing the reward margin between the preferred and non-preferred outputs. Then, the LLM (noted $\pi_\theta(\vx)$)\footnote{The choice of notation stems from the fact that it is treated as a policy for the purpose of RLHF.} is tuned so that

\begin{equation}
    \theta \gets \max_\theta \quad \E_{\pi_\theta(\vx)}
    \bigg[r_\phi(\vx)\bigg].
\end{equation}

Using the well-known identity for policy gradient in RL, this results in a gradient of the form

\begin{equation}
\label{eqn:policy-gradient}
    \nabla_\theta\E_{\pi_\theta(\vx)}\bigg[r_\phi(\vx)\bigg]
    = \E_{\pi_\theta(\vx)} \bigg[
        r_\phi(\vx) \nabla_\theta\log\pi_\theta(\vx)
    \bigg].
\end{equation}

In order to prevent ``catastrophic forgetting'' in the model, it is common to combine the learned reward with a KL-divergence term between the finetuned LLM and the pretrained LLM (noted $\pi_\textnormal{pre}$)---a practice known as {\em KL-control}---yielding a gradient of the form

\begin{equation}
    \E_{\pi_\theta(\vx)} \bigg[
        \left(
            r_\phi(\vx)
            - \beta\log\frac{\pi_\theta(\vx)}{\pi_\textnormal{pre}(\vx)}
        \right)
        \nabla_\theta\log\pi_\theta(\vx)
    \bigg].
\end{equation}

Such a constraint is usually handled using policy gradient methods such as proximal policy optimization~\citep[PPO; ][]{schulman2017proximal}.

\section{Related work}

Our work is related to numerous previous works investigating alternatives to RL to learn from pairwise human preferences.

\citet{liu2023languages} move away from RL and propose a technique called Chain of Hindsight which turns annotator-ranked pairs of model outputs and their associated contexts into training sequences of the form ``\{Context\} \{Positive prefix\}: \{Desirable output\} \{Negative prefix\}: \{Undesirable output\}''. The model is finetuned into these chain-of-hindsight sequences and then conditioned at inference time by appending ``\{Positive prefix\}'' to the annotator prompt.

\citet{korbak2022reinforcement} examine the relationship between reward maximization approaches like RLHF (with or without KL-control) and distribution matching approaches like distributional policy gradients (DPG). They show that RLHF with KL-control can also be thought of as a distribution matching approach, as it is equivalent to minimizing the reverse KL-divergence between $\pi_\theta(\vx)$ and an energy-based model of the form $p_z(\vx) \propto \pi_\textnormal{pre}(\vx)e^{r(\vx)/\beta}$. Our work complements their analysis by characterizing what the optimal reward function captures about the annotator (their implicit preference distribution) under various hypothesized generative processes for pairwise preferences.

\citet{yuan2023rrhf} align the model with pairwise human preferences through a combination of a ranking loss on the log-likelihood margin between preferred and non-preferred model outputs and a negative log-likelihood loss on the preferred model output. \citet{zhao2023slic} optimize for pairwise preference alignment using a hinge loss on the log-likelihood margin between preferred and non-preferred model outputs and a regularization term towards a model finetuned with supervision. \citet{liu2023statistical} extend the loss introduced by \citet{zhao2023slic} by normalizing the log-likelihood margin using $\pi_\textnormal{pre}(\vx)$. However, their primary contribution is a statistical rejection sampling algorithm called RSO that draws samples from the optimal policy using samples from $\pi_\textnormal{pre}(\vx)$ and a learned reward function.

\citet{rafailov2023direct} show an equivalence between RLHF with KL-control and binary classification of pairwise comparison outcomes when the reward function used to make a prediction (\autoref{eqn:reward-function-p-y}) is defined as $r(\vx) = \beta(\log\pi_\theta(\vx) - \log\pi_\textnormal{pre}(\vx))$ (called DPO, for direct preference optimization). \citet{azar2023general} frame DPO as a special case of a broader family of approaches called $\Psi\textnormal{PO}$ where the maximized quantity is a function $\Psi$ of the preference probability defined by the Bradley-Terry model. They introduce another special case called identity preference optimization (IPO) which is optimized via a quadratic loss instead of a sigmoid binary cross-entropy loss and is meant to make KL-regularization more effective in the face of deterministic annotator preferences. Whereas \citet{azar2023general} consider various transformations of the preference probability for the optimal reward under the Bradley-Terry model, we systematically use the logit function on the preference probability and instead examine various transformations of $\log\pi_\theta(\vx)$ to obtain the preference probability itself.

\section{A probabilistic interpretation of learning from pairwise human preferences}
\label{sec:probabilistic-interpretation}

We now consider a probabilistic interpretation of LHF in the case where the training signal originates from pairwise preferences. First, we need to consider the generative process for the preferences and make explicit assumptions about it.

\subsection{Reward learning as density estimation under the Luce choice rule}

Let $p^*(y; \vx_A, \vx_B)$ be an annotator's preference probability whose functional form we will make more explicit shortly and encapsulates our assumptions on the generative process for pairwise human preferences. Let $q(\vx_A, \vx_B)$ be a {\em proposal distribution} over pairs $\vx_A$ and $\vx_B$ to be ranked by the annotator. One common choice of proposal is to draw independent sample pairs from $\pi_\textnormal{pre}(\vx)$. Using the two distributions we can decompose the true distribution $D$ of preference data used in the expectation in \autoref{eqn:reward-function} into

\begin{equation}
\label{eqn:reward-function-clarified}
    D(\vx_A, \vx_B, y) =  q(\vx_A, \vx_B) p^*(y; \vx_A, \vx_B),
\end{equation}

and we can rewrite \autoref{eqn:reward-function} as

\begin{equation}
    \phi \gets \min_\phi \quad \E_{q(\vx_A, \vx_B)}\left[ 
        \E_{p^*(y; \vx_A, \vx_B)}\bigg[
            -y \cdot \log p_\phi(y; \vx_A, \vx_B)
            -(1 - y) \cdot \log (1 - p_\phi(y; \vx_A, \vx_B))
        \bigg]
    \right],
\end{equation}

which is equivalent\footnote{Recall that the entropy of $p^*(\cdot)$ does not depend on $\phi$} to

\begin{equation}
\label{eqn:reward-function-ratio-rule}
    \phi \gets \min_\phi \quad \E_{\vx_A, \vx_B \sim q(\vx_A, \vx_B)}
    \bigg[
        \KL\big(p^*(y; \vx_A, \vx_B) \mid\mid p_\phi(y; \vx_A, \vx_B)\big)
    \bigg]
\end{equation}

and is globally minimized when the KL-divergence is zero for all possible $\vx_A$ and $\vx_B$ pairs.

The correct assumption to make about the generative process for pairwise human preferences is essentially an empirical question, but a reasonable hypothesis is a model of decision processes known as the Luce choice rule~\citep[or ratio rule;][]{shepard1957stimulus,luce1963handbook}. As \citet{sanborn2007markov} and \citet{peterson2018capturing} explain, a common assumption about the behavior of rational Bayesian subjects is that they make choices according to  some {\em implicit preference distribution} $p^*(\vx)$ over outcomes: for any two outcomes $\vx_A$ and $\vx_B$, they are assumed to prefer $\vx_A$ over $\vx_B$ with probability

\begin{equation}
\label{eqn:luce-choice-rule}
    \textnormal{Prob}(\vx_A \succ \vx_B)
    = \frac{p^*(\vx_A)}{p^*(\vx_A) + p^*(\vx_B)}.
\end{equation}

This model is also known as the Bradley-Terry model~\citep{bradley1952rank}. In fact, \citet{sanborn2007markov} and \citet{peterson2018capturing} use this very assumption to place human subjects at the core of a Monte Carlo Markov Chain (MCMC) by having them perform the accept/reject step and ultimately drawing samples from their implicit preference distribution $p^*(\vx)$ over outcomes.

\autoref{eqn:luce-choice-rule} and \autoref{eqn:reward-function-p-y} share the same functional form for a reason: if we assume the generative process for pairwise human preferences follows the Luce choice rule, then predicting pairwise comparison outcomes using the same generative process results in a well-specified model of annotator behavior.

\begin{theorem}
\label{thm:reward-optimality}
    Let $p^*(\vx)$ be a probability distribution with support $\gS$, and let $q(\vx_A, \vx_B)$ be a joint probability distribution with support $\gS \times \gS$. Assume $q(\vx_A, \vx_B) > 0$ for all $\vx_A, \vx_B \in \gS \times \gS$. Let $p_\phi(y; \vx_A, \vx_B)$ and $p^*(y; \vx_A, \vx_B)$ be defined according to Equations \ref{eqn:reward-function-p-y} and \ref{eqn:luce-choice-rule}, respectively. The loss function

    \begin{equation*}
        \E_{\vx_A, \vx_B \sim q(\vx_A, \vx_B)} \bigg[
            \KL\big(p^*(y; \vx_A, \vx_B) \mid\mid p_\phi(y; \vx_A, \vx_B)\big)
        \bigg]
    \end{equation*}
    
    is globally minimized when
    
    \begin{equation*}
        e^{r_{\phi}(\vx)} \propto p^*(\vx), \quad \forall \vx \in \gS.
    \end{equation*}
\end{theorem}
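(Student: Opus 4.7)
The plan is to exploit the non-negativity of the KL divergence together with the full-support assumption on the proposal $q$. Since $q(\vx_A, \vx_B) > 0$ for every pair in $\gS \times \gS$, the expected KL is zero (and hence minimal) if and only if $p_\phi(y; \vx_A, \vx_B) = p^*(y; \vx_A, \vx_B)$ for every pair $(\vx_A, \vx_B) \in \gS \times \gS$. So the task reduces to characterizing when this pointwise equality of Bernoulli preference distributions holds, and showing that equality is achievable by some $r_\phi$ (i.e.\ the minimum is attained), namely any $r_\phi$ with $e^{r_\phi(\vx)} \propto p^*(\vx)$.

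Next, I would substitute the explicit forms from \autoref{eqn:reward-function-p-y} and \autoref{eqn:luce-choice-rule} into the pointwise equality:
\begin{equation*}
\frac{e^{r_\phi(\vx_A)}}{e^{r_\phi(\vx_A)} + e^{r_\phi(\vx_B)}}
= \frac{p^*(\vx_A)}{p^*(\vx_A) + p^*(\vx_B)}.
\end{equation*}
Cross-multiplying and simplifying reduces this to $e^{r_\phi(\vx_A)} p^*(\vx_B) = e^{r_\phi(\vx_B)} p^*(\vx_A)$, i.e.\ the ratio $e^{r_\phi(\vx)}/p^*(\vx)$ takes the same value for every $\vx \in \gS$. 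Fixing an arbitrary reference point $\vx_0 \in \gS$, let $Z = e^{r_\phi(\vx_0)}/p^*(\vx_0)$; then $e^{r_\phi(\vx)} = Z\, p^*(\vx)$ for all $\vx \in \gS$, which is exactly the claimed proportionality. Conversely, any $r_\phi$ satisfying this proportionality trivially recovers $p^*(y; \vx_A, \vx_B)$ and thus drives each KL term to zero, confirming global optimality.

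I do not expect any serious obstacle: the proof is essentially the observation that the Bradley–Terry/Luce functional form $a/(a+b)$ is invariant to and determined by the ratio $a/b$, combined with the standard Gibbs-inequality argument for KL. The only subtlety worth flagging is that the theorem implicitly assumes $r_\phi$ is expressive enough to represent $\log p^*(\vx) + \text{const}$ on $\gS$; under that (mild) assumption the proportionality condition is both necessary and sufficient, with the overall additive constant in $r_\phi$ unidentifiable, which is consistent with the $\propto$ in the statement.
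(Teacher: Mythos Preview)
Your proposal is correct and follows essentially the same route as the paper's proof: the paper also reduces to pointwise equality of the two Bernoulli preference probabilities (using KL non-negativity and the full-support assumption on $q$), cross-multiplies, and concludes that the difference $\log p^*(\vx) - r_\phi(\vx)$ is constant over $\gS$. The only cosmetic difference is that the paper first abstracts this step into a standalone lemma for generic score functions $\chi,\omega$ (and reuses it for the later results), whereas you carry it out directly in the notation of the theorem; your explicit check of the converse direction is a nice addition.
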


A proof is provided in Appendix~\ref{sec:theorems-and-proofs}. We can therefore reinterpret the training of a reward function on pairwise human preferences as the following probabilistic procedure: 

\begin{enumerate}
    \item We assume the generative process for pairwise human preferences follows the Luce choice rule (\autoref{eqn:luce-choice-rule}) and consider annotator preferences as samples from this process;
    \item We predict comparison outcomes with the learnable reward function using the same generative process (\autoref{eqn:reward-function-p-y});
    \item We use a maximum-likelihood criterion to optimize the reward-derived preference probability (\autoref{eqn:reward-function-ratio-rule}) and backpropagate the binary cross-entropy loss through \autoref{eqn:reward-function-p-y} and into the reward function itself; and
    \item Provided the Luce choice rule assumption made in step 1 holds, {\bf the resulting optimal reward function is then equal to the annotator's implicit log-preference distribution} (up to some additive constant).
\end{enumerate}

To demonstrate this, consider a univariate implicit preference distribution

\begin{equation}
\label{eqn:synthetic-annotator}
    p^*(x) = \frac{2}{5} \cdot N_{\textrm{truncated}}(\mu=-2.5, \sigma=0.25) + \frac{3}{5} \cdot N_{\textrm{truncated}}(\mu=2.5, \sigma=1.0)
\end{equation}

\begin{wrapfigure}{r}{0.5\textwidth}
    \centering
    \captionsetup{type=table}
    \caption{\label{tab:hyperparameters} Univariate toy experiment hyperparameters.}
    \begin{tabular}{lp{0.4\linewidth}}
        \toprule
        Hyperparameter & Value \\
        \midrule
        Architecture & MLP \\
        Hidden layers & 4 layers of width 64 \\
        Activation & $tanh$ \\
        \midrule
        Optimizer & Adam~\citep{kingma2015adam} \\
        Optimization steps & \num{8192} \\
        Learning rate & \num{5e-4} \\
        Learning rate schedule & Cosine decay to 0.0 over \num{8192} steps  \\
        \bottomrule
    \end{tabular}
\end{wrapfigure}

defined over the $[-10, 10]$ interval~(\autoref{fig:reward-training}, dashed blue). We simulate learning from pairwise preferences by drawing $2^{15}$ observation pairs $\vx_A$ and $\vx_B$ uniformly at random in $[-10, 10] \times [-10, 10]$ and drawing corresponding comparison outcomes $y$ using \autoref{eqn:luce-choice-rule}. We then train a reward function in the form of a four-layer MLP (refer to \autoref{tab:hyperparameters} for details). Finally, we approximate the normalizing constant for the distribution $p_\phi(\vx) = e^{r_\phi(\vx)} / Z(\phi)$ and visualize $p_\phi(\vx)$ (\autoref{fig:reward-training}, solid green). We can clearly see that in the large-data regime (in terms of number of ranked pairs), the reward function learns to model the implicit preference distribution (see also \autoref{fig:reward-training-log} in the Appendix).

In other words, if ``your language model is secretly a reward model''~\citep{rafailov2023direct}, it is perhaps equally fair to say that {\bf your reward model secretly learns the annotator's implicit preference distribution}.\footnote{Provided the annotator's preferences follow the Luce choice rule.} Ultimately, the reward function is a means to the end of aligning a generative model (for example, an LLM ``policy'') with human preferences, but the above discussion shows that under the Luce choice rule assumption the optimal reward function actually learns the implicit preference distribution $p^*(\vx)$.

\subsection{Specifying policies as normalized preference distributions}
\label{sec:policies-as-distributions}

\begin{wrapfigure}{r}{0.35\textwidth}
    \centering
    \includegraphics[height=12em]{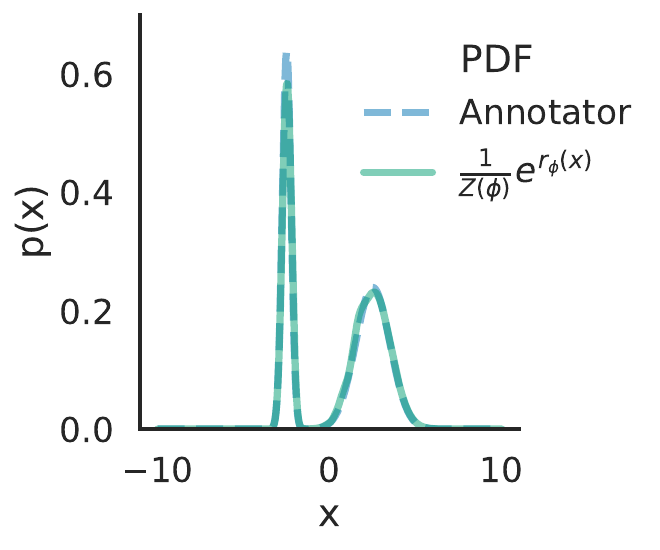}
    \caption{\label{fig:reward-training} Training a reward model on comparison outcomes stemming from a synthetic implicit preference distribution (\autoref{eqn:synthetic-annotator}; dashed blue) recovers the implicit distribution (solid green).}
\end{wrapfigure}

How, then, should we use our model of human preferences to construct a generative model that aligns with those preferences? We could use an RL algorithm to search for a policy that optimizes preferences, but \autoref{thm:reward-optimality} suggests that this may not be necessary: the ``reward model'' can already be interpreted as a probabilistic model of preferred outputs, since it estimates the implicit preference distribution $p^*(\vx)$ that is assumed to govern the annotator's behavior. This means that we could in principle simply use our approximation of the annotator's implicit preference distribution as a policy. The only remaining difficulty is that \autoref{thm:reward-optimality} only specifies the preference distribution up to a constant.

To address this, suppose we simply define the ``reward'' $r_\theta(\vx)$ as the log-density of a parameterized generative model, i.e. $r_\theta(\vx)=\log \pi_\theta(\vx)$.\footnote{We purposefully switch from $r_\phi(\vx)$ to $r_\theta(\vx)$ to highlight the fact that the reward function now depends on the LLM's parameters $\theta$.} Since $\pi_\theta(\vx)$ is a normalized probability distribution, attaining the global minimum of the loss function {\bf directly} results in $\pi_\theta(\vx) = p^*(\vx)$. This removes the need for a separate reward model, and the original ``reward'' is now just implicitly encoded in our learned approximation of the implicit preference distribution.

\highlightedaddition{This idea of bypassing the reward function and directly optimizing the LLM on pairwise human feedback is explored in recent works like ILHF~\citep{xu2023shattering} and DPO~\citep{rafailov2023direct} (which we will discuss in more detail shortly) as well as IPO~\citep{azar2023general}, RRHF~\citep{yuan2023rrhf}, SLiC-HF~\citep{zhao2023slic}, and RSO~\citep{liu2023statistical} (discussed in Appendix~\ref{sec:ce-alternatives}). We also note the probabilistic procedure described above can be thought of as an inductive bias in and of itself: if one believes that the implicit preference distribution is the ideal generative model, then expressing the reward as the LLM's log-likelihood is the correct thing to do. In practice, however, stronger inductive biases are used to regularize the tuned LLM towards its pretrained parameterization.}

\subsection{Expanding to a broader family of generative processes for pairwise preferences}\label{sec:luce-alternatives}

Our results show that---assuming the generative process for the annotator's pairwise preferences follows the Luce choice rule---fitting a model to those preferences using the parameterization in \autoref{eqn:reward-function-p-y} recovers the annotator's implicit preference distribution $p^*(\vx)$. This recasts the standard reward modeling procedure as a density estimation procedure. However, it is not necessarily obvious that annotators exactly follow the Luce choice rule, nor that the implicit preference distribution $p^*(\vx)$ we recover is itself a good ``policy''.

Luckily, our density estimation results can be extended to hold for generative processes beyond the Luce choice rule. We focus on the following family of equations for describing generative processes which we name {\bf preference behavior distribution equations (PBDEs)}, noted $\Omega_p(\vx)$:

\begin{equation}
    p(y; \vx_A, \vx_B) = \sigma\Big(\Omega_{p}(\vx_A) - \Omega_{p}(\vx_B) \Big), \quad
    \Omega_p(\vx) = f(\vx) \cdot \log p(\vx) + g(x),
    \quad f(\vx) > 0 \quad \forall \vx.
\end{equation}

\begin{theorem}
\label{thm:generalized-reward-optimality}
    Let the generative processes for pairwise preferences for the annotator and the model be governed by the {\bf same} PBDE, i.e.,
    
    \begin{equation*}
        \Omega_{p^*}(\vx) = f(\vx) \cdot \log p^*(\vx) + g(\vx)
        \quad \textnormal{and} \quad
        \Omega_{\pi_\theta}(\vx) = f(\vx) \cdot \log \pi_\theta(\vx) + g(\vx).
    \end{equation*}
    
    Then the global minimizer of
    
    \begin{equation*}
       \label{eqn:theorem-loss}
        \E_{q(\vx_A, \vx_B)} \left[ \KL\Big(p^*(y; \vx_A, \vx_B) \mid\mid p_\theta(y; \vx_A, \vx_B)\Big) \right]
    \end{equation*}
    
    is
    
    \begin{equation*}
        \pi_\theta(\vx) = p^*(\vx), \quad \forall \vx.
    \end{equation*}
\end{theorem}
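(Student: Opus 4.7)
The plan is to show that $\pi_\theta = p^*$ both achieves the global minimum of zero and is the unique such minimizer, mirroring the structure of the proof of Theorem~\ref{thm:reward-optimality} but exploiting the fact that $\pi_\theta$ is already normalized. Observe that the expected KL-divergence is non-negative and vanishes exactly when $p_\theta(y; \vx_A, \vx_B) = p^*(y; \vx_A, \vx_B)$ for every pair in the support of $q$. Sufficiency is immediate: if $\pi_\theta = p^*$ then $\Omega_{\pi_\theta}(\vx) = \Omega_{p^*}(\vx)$ for all $\vx$ (since $f$ and $g$ are shared between the two PBDEs), so the induced Bernoulli probabilities coincide and the loss is zero.

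For necessity, I would first invoke injectivity of $\sigma$ on the pointwise equality $p_\theta(y; \vx_A, \vx_B) = p^*(y; \vx_A, \vx_B)$ (which holds $q$-almost surely, hence everywhere on $\gS \times \gS$ under an assumption $q > 0$ analogous to Theorem~\ref{thm:reward-optimality}) to obtain $\Omega_{\pi_\theta}(\vx_A) - \Omega_{\pi_\theta}(\vx_B) = \Omega_{p^*}(\vx_A) - \Omega_{p^*}(\vx_B)$ for all $(\vx_A, \vx_B) \in \gS \times \gS$. Rearranging, the function $\vx \mapsto \Omega_{\pi_\theta}(\vx) - \Omega_{p^*}(\vx)$ must be constant; call this constant $c$. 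Substituting the PBDE definitions, the $g(\vx)$ terms cancel and I am left with $f(\vx)[\log \pi_\theta(\vx) - \log p^*(\vx)] = c$, equivalently $\pi_\theta(\vx) = p^*(\vx)\,\exp(c/f(\vx))$ for all $\vx \in \gS$.

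The main obstacle---and the only substantive step beyond pattern-matching on Theorem~\ref{thm:reward-optimality}---is then to pin down $c$. My plan is to exploit the fact that both $\pi_\theta$ and $p^*$ are normalized probability distributions on $\gS$, together with the positivity assumption $f(\vx) > 0$. Integrating the relation above against $\vx$ over $\gS$ yields $1 = \int_\gS p^*(\vx)\exp(c/f(\vx))\,d\vx$. Since $f > 0$ everywhere, the sign of $c/f(\vx)$ is determined solely by the sign of $c$: if $c > 0$ then $\exp(c/f(\vx)) > 1$ on all of $\gS$, forcing the integral to strictly exceed $1$; symmetrically if $c < 0$ the integral is strictly less than $1$. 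Both cases contradict normalization, so $c = 0$ and hence $\pi_\theta(\vx) = p^*(\vx)$ for all $\vx$, completing the argument.
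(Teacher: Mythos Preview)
Your proposal is correct and follows essentially the same route as the paper: the paper invokes its Lemma~\ref{lem:bce-optimality} to conclude $\Omega_{p^*}(\vx) = \Omega_{\pi_\theta}(\vx) + C$, cancels $g$, obtains $p^*(\vx) = \pi_\theta(\vx)\,e^{C/f(\vx)}$, and then uses normalization together with $f(\vx)>0$ to force $C=0$ by exactly the sign argument you give. The only cosmetic differences are that you unpack the lemma inline (via injectivity of $\sigma$) rather than citing it, and you integrate against $p^*$ rather than $\pi_\theta$; neither changes the substance.
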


A proof is provided in Appendix~\ref{sec:theorems-and-proofs}. \highlightedaddition{
a simple choice of PBDE is the Luce choice rule itself, as we discuss in \autoref{sec:policies-as-distributions}. This corresponds to setting $f(\vx) = 1$ and $g(\vx) = 0$, and is adopted in practice by the {\bf Inclusive Learning From Human Feedback }~\citep[ILHF;][]{xu2023shattering} approach.}

\highlightedaddition{
    We now give a concrete example of an alternative PBDE to the Luce choice rule, which stems from the latter's behavior in the context of variable-length inputs like text sequences. In that case, since the likelihood assigned by the implicit preference distribution to a token sequence $\vx$ is decomposed into a product of per-token likelihoods in $[0, 1]^{|\vx|}$, longer sequences naturally have lower likelihood magnitudes. This means that an implicit preference distribution represented by an autoregressive model trained only on long sequences could in some cases assign a higher likelihood to short sequences despite having a near-zero probability of sampling them. In other words, if we were to sample from such an implicit preference distribution we would get long sequences, yet using the same distribution to sample preferences with the Luce choice rule would tend to prefer shorter sequences. Refer to \autoref{sec:lm1b-experiments} for a practical illustration of this phenomenon. We argue that this is unrealistic: an annotator should not prefer shorter sequences simply because the total number of possible short sequences is smaller than the total number of possible long sequences. (In fact, real-world human feedback datasets tend to show a slight preference towards \emph{longer} responses \citep{singhal2023long}.)
    
    A more realistic alternative to the Luce choice rule in this context would be the length-normalized variant

    \begin{equation}
        \label{eqn:length-normalize-luce-choice-rule}
        \textnormal{Prob}(\vx_A \succ \vx_B)
        = \frac{p^*(\vx_A)^\frac{1}{|\vx_A|}}{p^*(\vx_A)^\frac{1}{|\vx_A|} + p^*(\vx_B)^\frac{1}{|\vx_B|}}.
    \end{equation}
    
    The above has the advantage that its outcome is no longer biased by the inherent differences in likelihood magnitudes between sequences of varying lengths. This is likely why length-normalized log-likelihoods have been used by \citet{yuan2023rrhf} in RRHF, for instance. Since \autoref{eqn:length-normalize-luce-choice-rule} is a PBDE with $f(\vx) = |\vx|^{-1}$ and $g(\vx) = 0$, we know through \autoref{thm:generalized-reward-optimality} that it is possible to recover a globally optimal $\pi_\theta(\vx) = p^*(\vx)$ when both the annotator and the model preferences are sampled according to that generative process.
}
    
\autoref{thm:generalized-reward-optimality} provides a global optimality result for a broad family of generative processes for pairwise preferences when the annotator and model share the same process, but it can sometimes be useful to define a different generative process for the model than the one assumed for the annotator---for instance to influence the properties of the globally optimal $\pi_\theta(\vx)$. We now show how {\bf Direct Preference Optimization}~\citep[DPO;][]{rafailov2023direct} can be recast as an instance of this general density estimation procedure. DPO bypasses the policy optimization step in RLHF and instead directly expresses the reward modeling step in terms of $\pi_\theta(\vx)$. The authors point out that the KL-controlled policy optimization stage

\begin{equation}
    \theta \gets \max_\theta \quad
    \E_{\pi_\theta(\vx)} \bigg[r_\phi(\vx)\bigg]
    - \beta\KL\big(\pi_\theta(\vx) \mid\mid \pi_\textnormal{pre}(\vx)\big)
\end{equation}

has a closed-form solution

\begin{equation}
    \pi_\theta(\vx) \propto
    \pi_\textnormal{pre}(\vx)
    \exp\left(\frac{1}{\beta} r_\phi(\vx) \right).
\end{equation}

Solving for $r_\phi(\vx)$ and performing the substitution in \autoref{eqn:reward-function}, they obtain a loss function for the reward which expresses the probability that $\vx_A \succ \vx_B$ as

\begin{equation}
\label{eqn:dpo-generative-process}
    p_\theta(y; \vx_A, \vx_B)
    = \sigma\left(
        \beta\log\frac{\pi_\theta(\vx_A)}{\pi_\textnormal{pre}(\vx_A)}
      - \beta\log\frac{\pi_\theta(\vx_B)}{\pi_\textnormal{pre}(\vx_B)}
    \right).
\end{equation}

This equation effectively defines a generative process for pairwise preferences for $\pi_\theta(\vx)$ that is guided by a PBDE with $f(\vx) = \beta$ and $g(\vx) = -\beta\log\pi_\textnormal{pre}(\vx)$, i.e., 

\begin{equation}
    \Omega_{\pi_\theta}(\vx) = \beta\log\pi_\theta(\vx) - \beta\log\pi_\textnormal{pre}(\vx).
\end{equation}

\begin{wrapfigure}{r}{0.4\textwidth}
    \centering
    \includegraphics[height=12em]{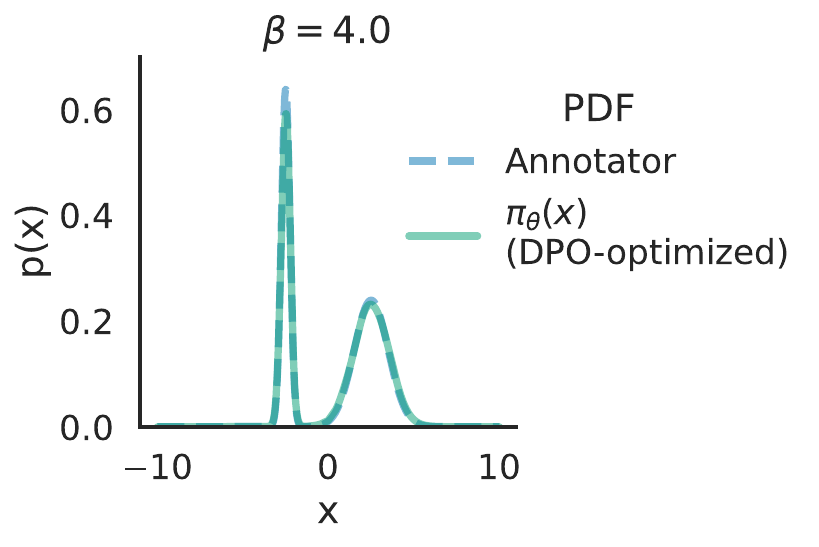}
    \caption{\label{fig:dpo-well-specified} \autoref{thm:generalized-reward-optimality} also holds for DPO if the annotator and model share the same generative process (\autoref{eqn:dpo-generative-process}).}
\end{wrapfigure}

If we assume the preference generative process for the annotator is governed by the same PBDE---which could be the case if annotator preferences express only \emph{relative} improvements that correct for the base fluency of the outputs---then \autoref{thm:generalized-reward-optimality} holds and the model recovers $\pi_\theta(\vx) = p^*(\vx)$ (\autoref{fig:dpo-well-specified}).

Note that different assumptions about the preference generative processes can lead to a different relationship between $p^*$ and $\pi_\theta$, even if those assumptions produce the same observed preference distribution $p(y; \vx_A, \vx_B)$. For instance, if we instead assume the annotator behaves according to the Luce choice rule, but still minimize the DPO objective, the globally optimal $\pi_\theta(\vx)$ would become

\begin{equation}
\label{eqn:kl-control-optimal}
    \pi_\theta(\vx) \propto \pi_\textnormal{pre}(\vx) \cdot p^*(\vx)^\frac{1}{\beta}.
\end{equation}

A proof is provided in Appendix~\ref{sec:theorems-and-proofs}. Under this assumption, DPO converges to a product of experts between a pretrained ``prior'' policy $\pi_\textnormal{pre}(\vx)$ and the annotator's temperature-smoothed implicit preference distribution $p^*(\vx)$. This means that whether or not $\pi_\theta(\vx)$ converges to $p^*(\vx)$ depends as much on our assumptions about the preference generating process as it does on the algorithm we use.

We illustrate this by finetuning a pretrained model $\pi_\textnormal{pre}(\vx)$ with DPO on pairwise preferences acquired from a synthetic annotator whose generative process for pairwise preferences obeys the Luce choice rule with an implicit preference distribution defined in \autoref{eqn:synthetic-annotator} (\autoref{fig:dpo-kl-control}, dashed blue). The model $\pi_\textnormal{pre}(\vx)$ (\autoref{fig:dpo-kl-control}, dotted orange) is an energy-based model with an energy function expressed as a four-layer MLP (we reuse hyperparameters detailed in \autoref{tab:hyperparameters}):

\begin{equation}
\label{eqn:mlp-ebm}
    \pi_\textnormal{pre}(\vx) = \frac{1}{Z_\textnormal{pre}} e^{-\textnormal{MLP}_\textnormal{pre}(\vx)}.
\end{equation}

\begin{figure}[t]
    \centering
    \includegraphics[height=12em]{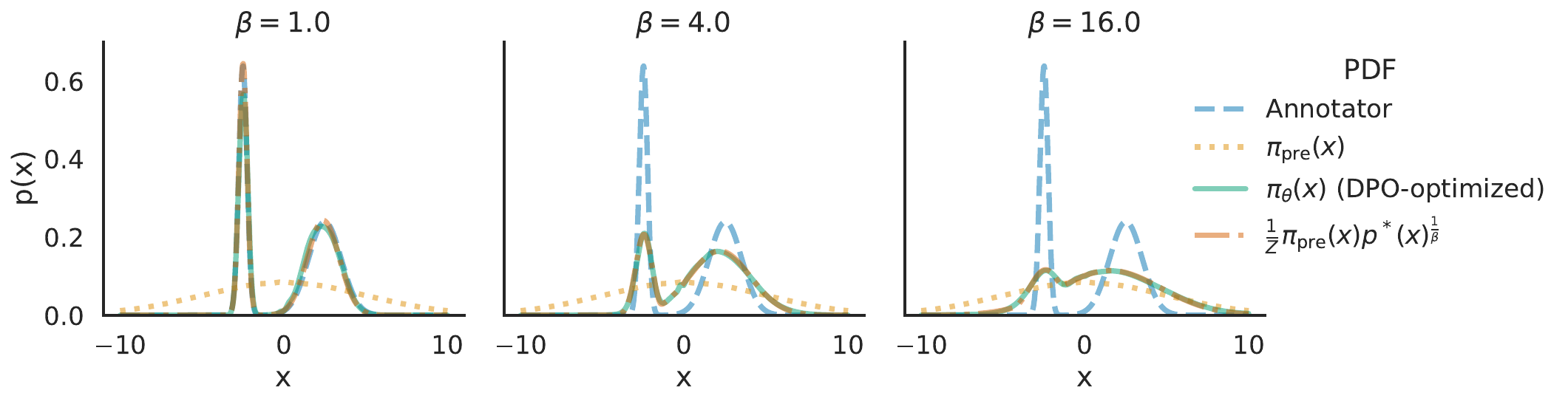}
    \caption{\label{fig:dpo-kl-control} Under the Luce choice rule assumption for the annotator's generative process on pairwise preferences, using DPO to tune a generative model $\pi_\textnormal{pre}(\vx)$ (dotted orange) on preferences derived from the implicit preference distribution (dashed blue) results in a mixture of experts model between the initial model $\pi_\textnormal{pre}$ and the temperature-smoothed implicit preference distribution, as demonstrated by the agreement between the empirical (solid green) and theoretical (dash-dotted red) curves.}
\end{figure}

We initialize it by regressing on the log-likelihood of $N_{\textrm{truncated}}(\mu=0.0, \sigma=5.0)$ defined over $[-10, 10]$, and we sweep over $\beta \in \{1, 4, 16\}$ to adapt it with DPO. The adapted model (\autoref{fig:dpo-kl-control}, solid green) behaves just as predicted by \autoref{eqn:kl-control-optimal} (\autoref{fig:dpo-kl-control}, dash-dotted orange).

We point out that \autoref{eqn:kl-control-optimal} is almost a weighted geometric mean but is missing a $1 - \frac{1}{\beta}$ exponent on $\pi_\textnormal{pre}(\vx)$. If we were instead to define $f(\vx) = \alpha^{-1}$ and $g(\vx) = (1 - \alpha^{-1}) \log \pi_\textrm{pre}(\vx)$, the PBDE for the model would result in a globally optimal $\pi_\theta(\vx)$ (again, assuming the annotator behaves according to the Luce choice rule) of the form

\begin{equation}
\label{eqn:geometric-average-optimal}
    \pi_\theta(\vx) \propto \pi_\textnormal{pre}(\vx)^{1 - \alpha} \cdot p^*(\vx)^\alpha.
\end{equation}

Refer to Appendix~\ref{sec:theorems-and-proofs} for a proof and \autoref{fig:geometric-average} in the Appendix for an empirical learning behavior visualization in that case.

\section{Annotator misspecification}
\label{sec:annotator-misspecification}

In Section~\ref{sec:luce-alternatives} we touched on cases where the mismatch generative processes for pairwise preferences between the annotator and the model was used to our advantage to induce specific properties in the globally optimal model. In this section, we show that an unwanted mismatch in generative processes (which we call {\em annotator misspecification}) can lead to badly tuned policies.

\subsection{Annotator misspecification in a toy setting}

\begin{figure}[t]
    \centering
    \includegraphics[width=\textwidth]{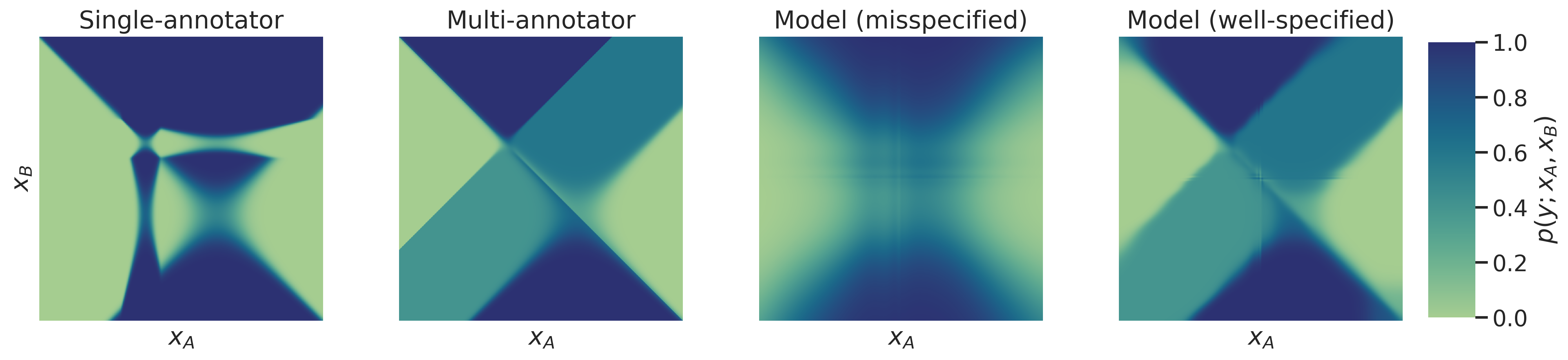}
    \caption{\label{fig:heatmap} $\textnormal{Prob}(\vx_A \succ \vx_B)$ for single-annotator (\autoref{eqn:single-annotator}) and multi-annotator (\autoref{eqn:multi-annotator}) behavior (left two plots) as well as models adapted with a misspecified and well-specified annotator behavior model (right two plots). The large regions of near-0.5 probability in the multi-annotator case (second plot) are caused by strong but opposing preferences for the two annotators, which cannot be captured by a single-annotator reward model (third plot).}
\end{figure}

Consider a scenario where two annotators---Annotator 1 and Annotator 2---are providing feedback on coffees served at various temperatures. Annotator 1 happens to really like iced coffee, while Annotator 2 prefers hot coffee. In the experiment, an annotator is selected at random and is served a pair of coffees at temperatures sampled independently at random. The annotator expresses a preference which is added to the feedback dataset alongside the two coffee temperatures, but {\em without noting the annotator's identity}.

We simulate this by modifying the way in which feedback is acquired from \autoref{eqn:synthetic-annotator}: instead of having a single annotator following the Luce choice rule with an implicit preference distribution which covers two modes, we assume the distribution of \emph{preferences} is itself a mixture of two preference distributions, each of which is obtained from a distinct annotator following the Luce choice rule with an implicit preference distribution covering one of the two modes. Mathematically speaking, instead of drawing preferences from

\begin{equation}
\label{eqn:single-annotator}
    \textnormal{Prob}(\vx_A \succ \vx_B)
    = \frac{p^*(\vx_A)}{p^*(\vx_A) + p^*(\vx_B)}
    = \frac{\sum_{i=1}^2 w_i \cdot p_i^*(\vx_A)}{\sum_{i=1}^2 w_i \cdot p_i^*(\vx_A) + \sum_{i=1}^2 w_i \cdot p_i^*(\vx_B)},
\end{equation}

(where $w_1 = 2/5$, $p^*_1 = N_\textnormal{truncated}(\mu = -2.5, \sigma = 0.25)$, $w_2 = 3/5$, and $p^*_2 = N_\textnormal{truncated}(\mu = 2.5, \sigma = 1.0)$), we now sample preferences from a mixture of two separate preference distributions

\begin{equation}
\label{eqn:multi-annotator}
    \textnormal{Prob}(\vx_A \succ \vx_B)
    = \sum_{i=1}^2 w_i \cdot \textnormal{Prob}_i(\vx_A \succ \vx_B)
    = \sum_{i=1}^2 w_i \cdot \frac{p_i^*(\vx_A)}{p_i^*(\vx_A) + p_i^*(\vx_B)}
\end{equation}

\begin{wrapfigure}{r}{0.47\textwidth}
    \centering
    \includegraphics[height=12em]{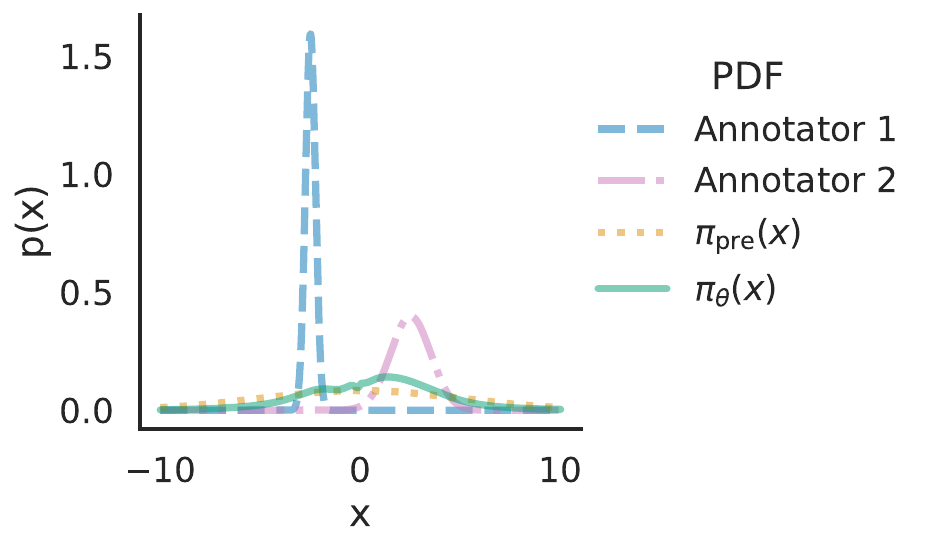}
    \caption{\label{fig:multi-annotator-misspecified} If preferences are aggregated across two annotators (dashed blue, dot-dashed pink) but the model is adapted under a single-annotator assumption, it fails to capture either annotator's PDF (solid green).}
\end{wrapfigure}

A visual way to reason about the differences between Equations \ref{eqn:single-annotator} and \ref{eqn:multi-annotator} is to plot $\textnormal{Prob}(\vx_A \succ \vx_B)$ in both cases (\autoref{fig:heatmap}, left half). A cursory examination shows that the two result in very different comparison outcome statistics.

If we adapt $\pi_\textnormal{pre}(\vx)$ on comparison outcomes sampled from \autoref{eqn:multi-annotator} using a Luce choice rule generative process for the model, we are in a annotator misspecification setting and the adapted model (\autoref{fig:multi-annotator-misspecified}, solid green) fails to capture regions of high density under either of the two annotators' PDFs (\autoref{fig:multi-annotator-misspecified}, dashed blue and dot-dashed pink). In other words, we learn to serve lukewarm coffee to both annotators despite neither of them enjoying it. This is also reflected in the comparison outcome statistics captured by the model (\autoref{fig:heatmap}, third plot). Addressing annotator misspecification is an open problem, especially in the absence of annotator IDs. In Appendix~\ref{sec:well-specified-annotator} we present results where we succeed in disentangling the two annotators' preferences without annotator IDs (also refer to \autoref{fig:heatmap}'s rightmost plot), but applying the same approach proved unsuccesful in the larger-scale setting which we will discuss shortly.

This failure mode of learning from pairwise preferences is particularly relevant in a context where LLMs are aligned with human preferences by crowdsourcing the acquisition of pairwise preferences across multiple human annotators. It could appear at first glance that the resulting preferences are inconsistent with one another when in fact annotator identity is a confounding factor that needs to be taken into account. For instance, consider a setting where a group of human annotators are asked to rank pairs of model outputs in terms of fluency. Given equally fluent outputs, some annotators may be biased towards short sentences while others may be biased towards longer sentences.

\subsection{Annotator misspecification in a language modeling setting}
\label{sec:lm1b-experiments}

We now move to the One Billion Words Benchmark~\citep[LM1B;][]{chelba2013one} to illustrate the consequences of annotator misspecification in a language modeling setting. Starting from Flax's LM1B example code\footnote{\url{https://github.com/google/flax/tree/main/examples/lm1b}} we train three separate instances of a Transformer-based architecture on distinct subsets of LM1B according to sequence length: less than 50 tokens ({\em whole}), between 1 and 20 tokens inclusively ({\em short}), and between 30 and 50 tokens inclusively ({\em long}). We treat the {\em whole} model as the pretrained LM to adapt via pairwise preference feedback and use the { \em short} and {\em long} models to form synthetic annotators. \autoref{fig:lm1b-length-distribution} (left) shows sequence length distributions for each model.

\highlightedaddition{
    Recall that for variable-length inputs the Luce choice rule results in ``length bias'' (in that applying it to a policy leads to ``length bias'' in the resulting preferences), since longer sequences naturally have lower likelihood magnitudes (\autoref{sec:luce-alternatives}). We illustrate this by comparing sequence length distributions (\autoref{fig:lm1b-length-distribution}, left) to the probability given by the Luce choice rule (\autoref{eqn:luce-choice-rule}) that a synthetic annotator with a {\em long} implicit preference distribution will prefer short sequences over long ones (\autoref{tab:comparison-outcomes}): that probability is around $97\%$. When comparing the outcome probabilities for short, medium, and long sequences (\autoref{tab:comparison-outcomes}), \autoref{eqn:length-normalize-luce-choice-rule} better aligns with the intuitions built from examining \autoref{fig:lm1b-length-distribution} (left).
}

\begin{figure}[t]
    \centering
    \includegraphics[width=0.49\textwidth]{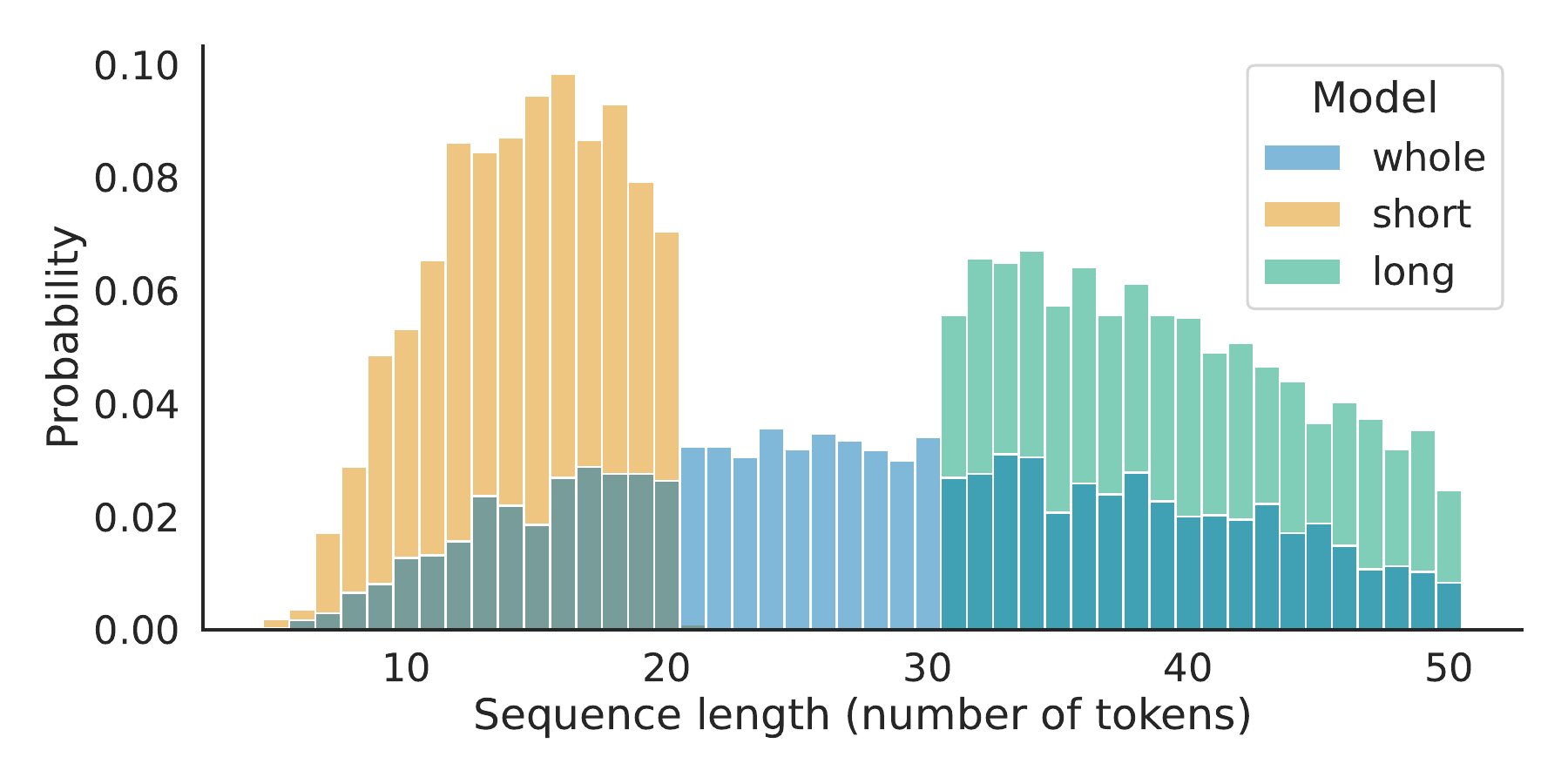}
    \hfill
    \includegraphics[width=0.49\textwidth]{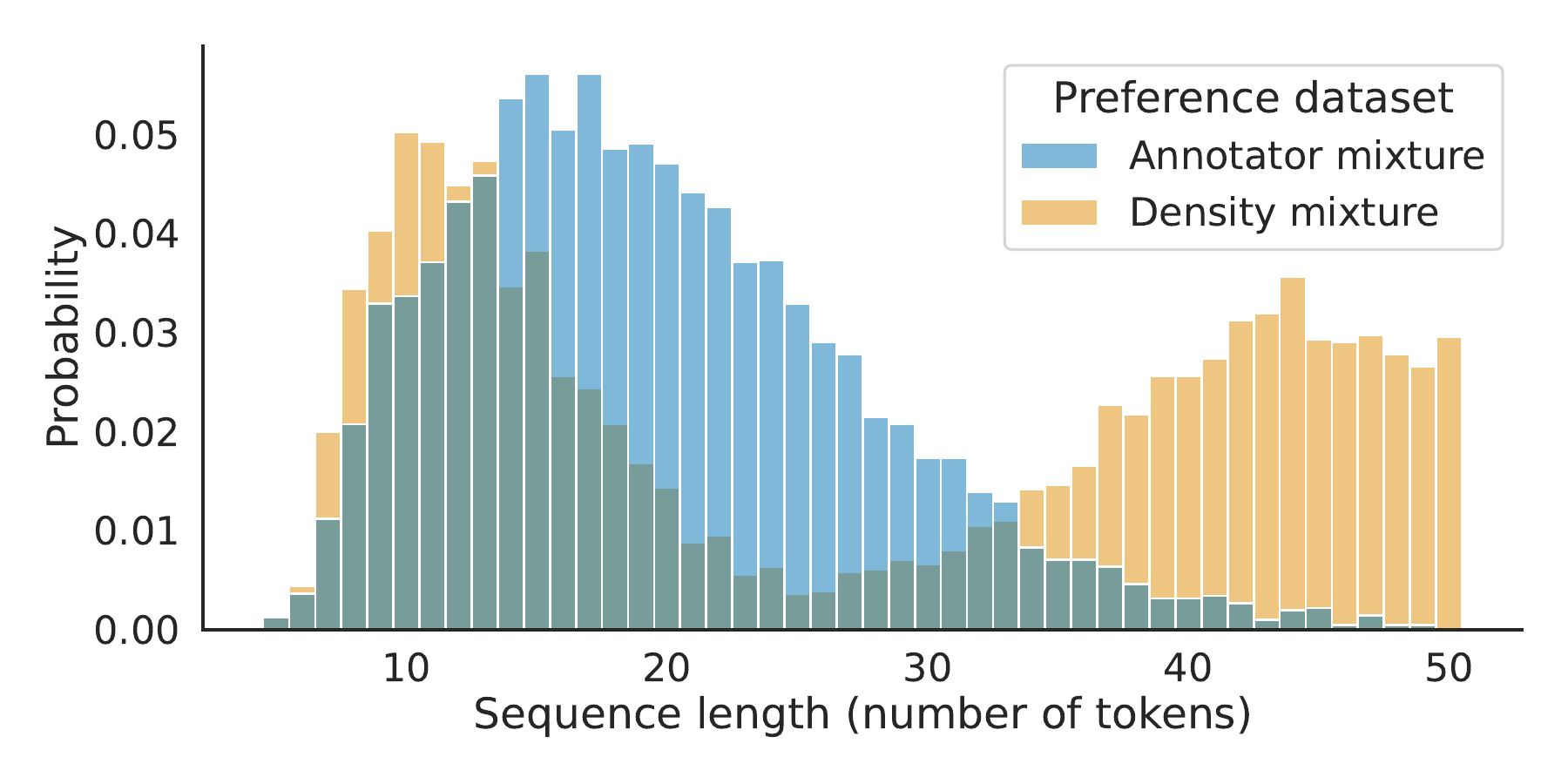}
    \caption{\label{fig:lm1b-length-distribution}Sentence length distributions. (Left) Histogram of sequence lengths for sequences sampled from the {\em whole}, {\em short}, and {\em long} model checkpoints. (Right) Histogram of sequence lengths after adapting the pretrained model $\pi_\textnormal{pre}(\vx)$ (the {\em whole} model checkpoint) on two distinct preference datasets: the ``annotator mixture'' dataset (\autoref{eqn:lm1b-annotator-mixture}) and the ``density mixture'' dataset (\autoref{eqn:lm1b-density-mixture}).}
\end{figure}

We now revisit annotator misspecification when pairwise preferences are acquired from multiple annotators, assuming annotators {\em and} the model behave according to the length-normalized Luce choice rule. We sample $2^{15}$ pairs from the {\em whole} model and compare the outcome of learning on two preference datasets:
\begin{itemize}
    \item The ``annotator mixture'' dataset is built as a uniform mixture of preferences sampled independently using the {\em short} and {\em long} model checkpoints. For each $(\vx_A, \vx_B)$ pair, we first sample uniformly at random which checkpoint will be used as the implicit preference distribution, and we then sample a preference using the length-normalized Luce choice rule:
    \begin{equation}
    \label{eqn:lm1b-annotator-mixture}
        p(y; \vx_A, \vx_B) =
            \sum_{\theta_\textnormal{pre} \in \{\theta_\textnormal{short}, \theta_\textnormal{long}\}}
            \frac{1}{2} \cdot \sigma\bigg(
                \frac{\log \pi_{\theta_\textnormal{pre}}(\vx_A)}{|\vx_A|} - \frac{\log \pi_{\theta_\textnormal{pre}}(\vx_B)}{|\vx_B|}
            \bigg).
    \end{equation}
    \item The ``density mixture'' dataset is built following the length-normalized Luce choice rule with an implicit preference distribution that is a uniform mixture of the distributions computed by the {\em short} and {\em long} model checkpoints:
    \begin{equation}
    \label{eqn:lm1b-density-mixture}
        p(y; \vx_A, \vx_B) = \sigma\bigg(
            \frac{\log \pi_\textnormal{mix}(\vx_A)}{|\vx_A|} - \frac{\log \pi_\textnormal{mix}(\vx_B)}{|\vx_B|}
        \bigg), \quad
        \pi_\textnormal{mix}(\vx) = \frac{\pi_{\theta_\textnormal{short}}(\vx) +
                                          \pi_{\theta_\textnormal{long}}(\vx)}{2}.
    \end{equation}
\end{itemize}

In other words, the ``annotator mixture'' dataset contains a mixture of preferences from a synthetic annotator who likes short sequences and a synthetic annotator who likes long sequences, whereas the ``density mixture'' dataset contains preferences from a single synthetic annotator who likes both short and long sequences, but not medium-length sequences. \autoref{fig:lm1b-length-distribution} (right) shows sequence length distributions after adapting on either of the two preference datasets. We notice that the model adapted on the ``annotator mixture'' preference dataset generates medium-length sentences much more frequently than either the {\em short} or {\em long} model checkpoints, exactly like in the ``cold and hot coffee'' toy example. This is a direct result of annotator misspecification: pairwise preferences are obtained by querying two synthetic annotators, whereas the learning rule used for adaptation assumes a generative process for preferences that involves a {\em single} annotator. On the other hand, when learning from preferences acquired from a synthetic annotator whose implicit preference distribution is a mixture of the {\em short} and {\em long} model checkpoints (``density mixture'' dataset), the generative processes for pairwise preferences are the same across annotator and model, and the model succeeds in learning to output a bimodal distribution of sequence lengths.

\begin{table}[t]
    \centering
    \caption{\label{tab:comparison-outcomes} Probability of a synthetic annotator preferring some token sequence over another token sequence as a function of their respective lengths. Token sequences are sampled from the pretrained LM ({\em whole}) and are binned into small (S, 20 tokens or less), medium (M, between 21 and 29 tokens), and large (L, 30 tokens or more) sets of sequences. $P(X \succ Y)$ represents an average of $P(x_A \succ x_B)$ over all $x_A \in X, x_B \in Y$.}
    \begin{tabular}{lrrrrrrrr}
        \toprule
                       & \multicolumn{2}{c}{$P(S \succ M)$} && \multicolumn{2}{c}{$P(S \succ L)$} && \multicolumn{2}{c}{$P(M \succ L)$} \\
        \cmidrule{2-3}\cmidrule{5-6}\cmidrule{8-9}
        Synthetic Annotator & Eqn~\ref{eqn:luce-choice-rule} & Eqn~\ref{eqn:length-normalize-luce-choice-rule}  && Eqn~\ref{eqn:luce-choice-rule} & Eqn~\ref{eqn:length-normalize-luce-choice-rule} && Eqn~\ref{eqn:luce-choice-rule} & Eqn~\ref{eqn:length-normalize-luce-choice-rule} \\ 
        \midrule
        {\em whole} & \num{0.909376} & \num{0.404204} && \num{0.992775} & \num{0.362137} && \num{0.917535} & \num{0.455379} \\
        {\em short} & \num{0.990599} & \num{0.953972} && \num{1.000000} & \num{0.999999} && \num{0.994535} & \num{0.978312} \\
{\em long} & \num{0.919023} & \num{0.248861} && \num{0.972909} & \num{0.053297} && \num{0.787296} & \num{0.157454} \\        \bottomrule
    \end{tabular}
\end{table}

\section{Discussion and Limitations}

In this work we use a simplified, synthetic experimental framework to empirically verify the theoretical properties of approaches which learn from pairwise human feedback. Specifically, we

\begin{enumerate}
    \item define a probability distribution over a random variable with finite support and use it as the implicit preference distribution for a synthetic annotator following a fixed preference generating process;
    \item provide the model with access to the implicit preference distribution only through a fixed number of queries to the synthetic annotator on observation pairs which are ranked stochastically using a PBDE; and
    \item adapt the model on the resulting $(\vx_A, \vx_B, y)$ triplets.
\end{enumerate}

\highlightedaddition{
    In this synthetic setting, we have shown that it is indeed possible to approximately recover the annotator's implicit preference distribution. We have also demonstrated how this procedure fails in the presence of annotator misspecification.
    
    Solving annotator misspecification in the presence of multiple annotators requires unsupervised discovery of annotator clusters and is a difficult and open problem. The naive approach we present in Appendix~\ref{sec:well-specified-annotator} requires knowing the correct number of clusters in advance and represents an important drawback. The literature on probabilistic modeling suggests that we could perhaps replace the categorical random variable random for the cluster ID with a continuous random variable, but this has yet to be tried in practice.

    In some cases practitioners could conceivably use additional metadata to their advantage, for instance by assigning each annotator with a unique ID and solving the binary preference classification problem conditioned on the annotator ID. Alternatively, if practitioners know in advance that preferences are polarized and if the way each annotator leans can be predicted from certain covariates (age, location, etc.), those covariates could be used as additional context in the binary preference classification problem.

    Determining whether behavioral assumptions are correct is also a difficult problem. Practitioners' decisions should be informed by results in disciplines like mathematical psychology or by their empirical knowledge. Additionally, should a model fail to learn from pairwise preferences, the possibility that a practitioner's behavioral assumptions are wrong should at least be considered.
    
    Beyond annotator misspecification, there are a number of other ways our synthetic density estimation setting differs from practical applications of learning from pairwise human feedback.
}

\paragraph{Finite data} We do not account for the labor necessary to acquire large amounts of human preference data. The theoretical results we present hold in the infinite data limit and the existence of a global minimizer for the loss function does not guarantee convergence through gradient descent. The univariate toy experiment (\autoref{fig:reward-training}) is carried at the larger end of the data regime spectrum given the dimensionality of the problem and the number of queries made to the synthetic annotator, and in that setting we achieve convergence.

Practical applications are high-dimensional (images, language tokens, etc.), and the number of synthetic annotator queries we make in our LM1B experiment is comparatively small: the size of the vocabulary (\num{30000}) and the maximum sequence length (50) result in a set of possible sequences that is considerably larger than the number of synthetic annotator queries ($2^{15}$). It is interesting to note, then, that our LM1B experiment (and more broadly contemporary works on adapting LLMs to pairwise human feedback) are reasonably effective at steering models from pairwise feedback alone. How this is possible despite the combinatorial nature of language is an interesting question for future work. Plausible hypotheses include:

\begin{itemize}
    \item Natural language conforms to the manifold hypothesis~\citep{cayton2005algorithms,narayanan2010sample}, and the pretrained model $\pi_\textnormal{pre}(\vx)$ is a particularly good proposal distribution to explore the ``manifold'' of natural language. The result is that characterizing a distribution over natural language outputs requires far fewer pairwise comparisons than the curse of dimensionality would suggest.
    \item Density estimation using pairwise human feedback {\em is} really hard in the data regimes in which it is deployed, but it is not {\em necessary} to fully capture the implicit preference distribution in order to steer the generative model in the right direction. Provided careful regularization is applied (which, to our knowledge, is a requirement in practice), preference datasets are informative enough.
\end{itemize}

\paragraph{Stationarity} Our experiments assume all observation pairs are sampled at once before adapting the model. In practice some approaches (like RLHF) use a nonstationary proposal distribution and alternate between querying for human preferences using sample pairs drawn from the model and adapting the model on human preferences. This raises interesting questions on whether stationarity is preferable, and whether active learning could play a role in reducing the amount of annotator queries necessary for good performance.

\section{Conclusion}

In this work we present a probabilistic modeling as an alternative to reinforcement learning for learning from pairwise human preferences. For a broad family of generative processes guided by what we call preference behavior distribution equations (PBDEs) we show through theoretical and empirical results that, so long as the human annotator and the model share the same PBDE for generating pairwise preferences, the global optimum of the resulting optimization problem is the annotator's implicit preference distribution over observations. We argue that being explicit about the assumed generative process for an annotator's preferences is important both to manipulate the properties of the globally optimal model that learns on those preferences {\em and} to avoid annotator misspecification failure modes. Finally, we illustrate the practical implications of annotator misspecification in a language modeling problem where the usual single-annotator assumption fails to adequately capture two annotators' conflicting preferences. 

\subsubsection*{Broader Impact Statement}

Our work investigates learning from pairwise human preferences from a theoretical standpoint, but it has multiple practical implications. Density estimation from pairwise feedback is a very difficult problem---arguably much more so than density estimation from target distribution samples---and most practical applications operate in the low-data regime. This has the potential to exacerbate biases in the pairwise preference data and makes it all the more important to ensure that annotators are representative of a diverse population of end-users. The insights we provide on annotator misspecification also highlight the importance of reasoning explicitly about the generative process for pairwise preferences, as a misspecified process has the potential to steer the model towards unwanted behavior or erase underrepresented perspectives altogether. We believe future research on both these aspects of learning from pairwise human preferences is important in order to achieve inclusive alignment with human judgement.

\maybelistcontributions

\maybeacknowledge

\bibliography{main}
\bibliographystyle{tmlr}


\clearpage

\appendix
\section{Appendix}

\subsection{Theorems and proofs}
\label{sec:theorems-and-proofs}

\begin{lemma}
\label{lem:bce-optimality}
    Let $q(\vx_A, \vx_B)$ be a joint probability distribution with support $\gS \times \gS$ and for which $q(\vx_A, \vx_B) > 0$ for all $\vx_A, \vx_B \in \gS \times \gS$. Let $\chi(\vx)\colon \gS \rightarrow \R$ and $\omega(\vx)\colon \gS \rightarrow \R$ be two injective functions mapping $\vx$ to scalar ``scores''. Let
    
    \begin{equation*}
        p_\chi(y; \vx_A, \vx_B) = \frac{e^{\chi(\vx_A)}}{e^{\chi(\vx_A)} + e^{\chi(\vx_B)}} = \sigma\Big(\chi(\vx_A) - \chi(\vx_B)\Big)
        \quad \textnormal{and} \quad
        p_\omega(y; \vx_A, \vx_B) = \sigma\Big(\omega(\vx_A) - \omega(\vx_B)\Big).
    \end{equation*}
    
    The loss function

    \begin{equation*}
        \E_{\vx_A, \vx_B \sim q(\vx_A, \vx_B)} \bigg[
            \KL\big(p_\chi(y; \vx_A, \vx_B) \mid\mid p_\omega(y; \vx_A, \vx_B)\big)
        \bigg].
    \end{equation*}
    
    is globally minimized when
    
    \begin{equation*}
        \chi(\vx) = \omega(\vx) + C, \quad C \in \R, \forall \vx \in \gS.
    \end{equation*}
\end{lemma}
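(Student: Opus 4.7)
The plan is to exploit the non-negativity of KL divergence together with the positivity of $q$ to reduce the optimization problem to a pointwise functional equation, and then use injectivity of the sigmoid to collapse it to a linear relation between $\chi$ and $\omega$.

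First I would observe that, since $\KL(\cdot \mid\mid \cdot) \geq 0$ and $q(\vx_A, \vx_B) > 0$ everywhere on $\gS \times \gS$, the expected KL is a non-negative quantity whose infimum is $0$. I would then verify attainability: choosing $\chi(\vx) = \omega(\vx) + C$ for any constant $C$ makes $\chi(\vx_A) - \chi(\vx_B) = \omega(\vx_A) - \omega(\vx_B)$, so $p_\chi(y; \vx_A, \vx_B) = p_\omega(y; \vx_A, \vx_B)$ identically, and the KL is zero for every $(\vx_A, \vx_B)$. This also shows the global minimum value is $0$ and that it is achievable (note $\omega$ is injective, so $\chi = \omega$ is a valid injective choice).

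For the converse direction---that any global minimizer must have this form---I would argue that if the expected KL equals $0$, then because $q > 0$ pointwise and KL is non-negative, we must have $p_\chi(y; \vx_A, \vx_B) = p_\omega(y; \vx_A, \vx_B)$ for all $(\vx_A, \vx_B) \in \gS \times \gS$. Since $\sigma$ is strictly monotone (hence injective), this yields the pointwise identity
\[
\chi(\vx_A) - \chi(\vx_B) = \omega(\vx_A) - \omega(\vx_B), \quad \forall \vx_A, \vx_B \in \gS.
\]
Fixing an arbitrary reference point $\vx_0 \in \gS$ and setting $\vx_B = \vx_0$ gives $\chi(\vx_A) = \omega(\vx_A) + C$ with $C := \chi(\vx_0) - \omega(\vx_0)$, which is the desired conclusion.

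There is no real obstacle here; the only subtlety is being careful that ``globally minimized when $\chi = \omega + C$'' is interpreted as a characterization (both necessary and sufficient), so I would write the proof as a short two-direction argument. The injectivity hypothesis on $\chi$ and $\omega$ is not used in either direction of the argument itself—it is only needed to ensure the solution set is nonempty within the class under consideration (and, implicitly, to make the ``score'' interpretation sensible).
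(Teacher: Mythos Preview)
Your proposal is correct and follows essentially the same route as the paper: reduce the expected KL to zero, use $q>0$ to force pointwise equality of $p_\chi$ and $p_\omega$, and then extract the constant offset between $\chi$ and $\omega$. The only cosmetic differences are that the paper cross-multiplies the sigmoid fractions and cancels (rather than invoking strict monotonicity of $\sigma$), and that it argues $\chi(\vx_B)-\omega(\vx_B)$ must be constant in $\vx_B$ rather than fixing a reference point; your two-direction presentation and your remark that the injectivity hypothesis on $\chi,\omega$ is not actually used are both accurate refinements of the paper's argument.
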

\begin{proof}
    We know from the properties of the KL-divergence that it is minimized when the left- and right-hand side represent the same distribution, which means that the expectation itself has a global minimum at zero when
    
    \begin{align*}
                   &\frac{e^{\chi(\vx_A)}}{e^{\chi(\vx_A)} + e^{\chi(\vx_B)}} = \frac{e^{\omega(\vx_A)}}{e^{\omega(\vx_A)} + e^{\omega(\vx_B)}},                             &\forall \vx_A, \vx_B \in \gS \times \gS \\
       \Rightarrow &\cancel{e^{\chi(\vx_A) + \omega(\vx_A)}} + e^{\chi(\vx_A) + \omega(\vx_B)} = \cancel{e^{\chi(\vx_A) + \omega(\vx_A)}} + e^{\chi(\vx_B) + \omega(\vx_A)}, &\forall \vx_A, \vx_B \in \gS \times \gS \\
       \Rightarrow &\chi(\vx_A) = \omega(\vx_A) + (\chi(\vx_B) - \omega(\vx_B)),                                                                                             &\forall \vx_A, \vx_B \in \gS \times \gS \\
    \end{align*}
    
    For any given $\vx_A$, the above equation needs to hold for all $\vx_B \in \gS$, which is only possible if the term $\chi(\vx_B) - \omega(\vx_B)$ is constant for all $\vx_B \in \gS$. We therefore conclude that
    
    \begin{equation*}
        \chi(\vx) = \omega(\vx) + C, \quad C \in \R, \forall \vx \in \gS.
    \end{equation*}
\end{proof}

\subsubsection*{Proof for \autoref{thm:reward-optimality}}

\begin{proof}
    The proof follows directly from applying Lemma~\ref{lem:bce-optimality} with $\chi(\vx) = \log p^*(\vx)$ and $\omega(\vx) = r_\phi(\vx)$: at the global optimum we have that
    
    \begin{equation*}
        \log p^*(\vx) = r_\phi(\vx) + C, \quad C \in \R, \forall \vx \in \gS,
    \end{equation*}
    
    which means that
    
    \begin{equation*}
        e^{r_\phi(\vx)} \propto p^*(\vx) \quad \forall \vx \in \gS.
    \end{equation*}
\end{proof}

\subsubsection*{Proof for \autoref{thm:generalized-reward-optimality}}

\begin{proof}
    Using Lemma~\ref{lem:bce-optimality} with $\chi(\vx) = \Omega_{p^*}(\vx)$ and $\omega(\vx) = \Omega_{\pi_\theta}(\vx)$, at the global optimum we have that
        
    \begin{align*}
                  & \Omega_{p^*}(\vx) = \Omega_{\pi_\theta}(\vx) + C,                         &C \in \R, \forall \vx \in \gS \\
      \Rightarrow & f(\vx) \log p^*(\vx) + g(\vx) = f(\vx) \log \pi_\theta(\vx) + g(\vx) + C, &C \in \R, \forall \vx \in \gS \\
      \Rightarrow & p^*(\vx) = \pi_\theta(\vx) e^\frac{C}{f(\vx)},                            &C \in \R, \forall \vx \in \gS \\
    \end{align*}
    
    Since both $p^*(\vx)$ and $\pi_\theta(\vx)$ are probability distributions, summing or integrating both sides results in
    
    \begin{equation*}
      \E_{\pi_\theta(\vx)} \left[ e^{\frac{C}{f(\vx)}} \right] = 1.
    \end{equation*}
    
    There are three possibilities for $C$: either $C < 0$, $C > 0$, or $C = 0$. Since we know $f(\vx) > 0$ for all $\vx$, $C < 0$ would result in an expectation smaller than 1 and $C > 0$ would result in an expectation larger than 1. We conclude that $C = 0$ and therefore
    
    \begin{equation*}
      \pi_\theta(\vx) = p^*(\vx), \quad \forall \vx.
    \end{equation*}
\end{proof}

\subsubsection*{Proof of \autoref{eqn:kl-control-optimal}'s global optimality}

\begin{proof}
    We observe that
    
    \begin{equation*}
        p_\theta(y; \vx_A, \vx_B)
        = \sigma\left(
            \beta\log\frac{\pi_\theta(\vx_A)}{\pi_\textnormal{pre}(\vx_A)}
          - \beta\log\frac{\pi_\theta(\vx_B)}{\pi_\textnormal{pre}(\vx_B)}
        \right).
    \end{equation*}
    
    Using Lemma~\ref{lem:bce-optimality} with $\chi(\vx) = \log p^*(\vx)$ and $\omega(\vx) = \beta(\log\pi_\theta(\vx) - \log\pi_\textnormal{pre}(\vx))$, at the global optimum we have that
    
    \begin{equation*}
        \log p^*(\vx) = \beta(\log\pi_\theta(\vx) - \log\pi_\textnormal{pre}(\vx)) + C, \quad C \in \R, \forall \vx \in \gS,
    \end{equation*}
    
    from which we conclude that
    
    \begin{equation*}
        \pi_\theta(\vx) \propto \pi_\textnormal{pre}(\vx) \cdot p^*(\vx)^\frac{1}{\beta}.
    \end{equation*}
\end{proof}

\subsubsection*{Proof of \autoref{eqn:geometric-average-optimal}'s global optimality}

\begin{proof}
    Using Lemma~\ref{lem:bce-optimality} with $\chi(\vx) = \log p^*(\vx)$ and $\omega(\vx) = \frac{1}{\alpha}(\log\pi_\theta(\vx) - \log\pi_\textnormal{pre}(\vx)) + \log\pi_\textrm{pre}(\vx)$, at the global optimum we have that
    
    \begin{equation*}
        \log p^*(\vx) = \frac{1}{\alpha}(\log\pi_\theta(\vx) - \log\pi_\textnormal{pre}(\vx)) + \log\pi_\textrm{pre}(\vx) + C, \quad C \in \R, \forall \vx \in \gS,
    \end{equation*}
    
    from which we conclude that
    
    \begin{equation*}
        \pi_\theta(\vx) \propto \pi_\textnormal{pre}(\vx)^{1 - \alpha} \cdot p^*(\vx)^\alpha.
    \end{equation*}
    
\end{proof}

\highlightedaddition{
    \subsection{Alternatives to the binary cross-entropy loss}
    \label{sec:ce-alternatives}
    
    Other recent works adapt the model directly on pairwise preferences but move beyond a binary cross-entropy loss formulation. As such, \autoref{thm:generalized-reward-optimality} is not applicable, and more work is needed to characterize their learning properties when viewed through the density estimation lens.
    
    {\bf Rank Responses to align Human Feedback}~\citep[RRHF;][]{yuan2023rrhf} uses a ranking loss instead of the binary cross-entropy loss and considers annotator rankings of $k$ observations $\vx_1$ through $\vx_k$:
    
    \begin{equation}
        \mathcal{L}_\textnormal{rank}(\theta) = \E_{D(\vx_1, \ldots, \vx_k)} \left[
            \sum_{\vx_i \succ \vx_j} -\left(
                \frac{1}{|\vx_i|}\log \pi_\theta(\vx_i) - \frac{1}{|\vx_j|}\log \pi_\theta(\vx_j)
            \right)^+
        \right],
    \end{equation}
    
    where $|\vx|$ is the length of the token sequence $\vx$. An additional loss component in the form of negative log-likelihood on the highest-ranked $\vx_i$ is also used.
    
    {\bf Sequence Likelihood Calibration from Human Feedback}~\citep[SLiC-HF;][]{zhao2023slic} considers (among other alternatives) a SLiC-HF-direct variant which optimizes $\pi_\theta(\vx)$ directly on pairwise human preferences. The loss function is defined as
    
    \begin{equation}
    \label{eqn:slic-loss}
        \mathcal{L}(\theta) = \E_{D(\vx_A, \vx_B, y)} \left[\left(
            \delta - \left(
                y \cdot \log\frac{\pi_\theta(\vx_A)}{\pi_\theta(\vx_B)}
              + (1 - y) \cdot \log\frac{\pi_\theta(\vx_B)}{\pi_\theta(\vx_A)}
            \right)
        \right)^+ \right]
        - \lambda \cdot \E_{D_\textnormal{SFT}}(\vx) \left[
            \log\pi_\theta(\vx)
        \right].
    \end{equation}
    
    The first expectation represents a hinge loss over the difference in log-likelihoods between $\vx_A$ and $\vx_B$: it encourages the difference to be positively large in favor of the higher-ranked observation, up to a margin $\delta$. The second expectation regularizes $\pi_\theta(\vx)$ towards a supervised finetuned (SFT) model trained on a dataset $D_\textnormal{SFT}$ of target domain examples, either using the supervised finetuning data directly, drawing samples from the supervised finetuned model, or maximizing the likelihood of preferred observations.
    
    Ignoring the regularization term, the SLiC-HF-direct loss function resembles \autoref{eqn:reward-function} but uses a hinge loss on the difference in log-likelihoods between $\vx_A$ and $\vx_B$ rather than the sigmoid binary cross-entropy. We examine the properties of SLiC-HF-direct using our synthetic density estimation problem by finetuning the pretrained model $\pi_\textnormal{pre}(\vx)$ with SLiC-HF-direct on pairwise preferences acquired from the implicit preference distribution defined in \autoref{eqn:synthetic-annotator}. Since we consider a purely preference-based setting, we cannot assume access to a finetuning dataset, however we can still regularize the model towards the pretrained model by sampling from $\pi_\textnormal{pre}(\vx)$ in \autoref{eqn:slic-loss}'s second expectation. We sweep over $\delta \in \{0.0, 0.25, 0.5, 1.0, 2.0\}$ and $\lambda \in \{0.0, 1.0\}$ to adapt the pretrained model with SLiC-HF-direct. \autoref{fig:slic-hf} shows the effect of $\delta$ and $\lambda$. The $\delta$ hyperparameter has a non-uniform smoothing effect on $\pi_\theta(\vx)$, as it bounds the maximum difference in log-probabilities allowed between any two observations. The $\lambda$ hyperparameter has the expected effect of regularizing the adapted model towards $\pi_\textnormal{pre}(\vx)$.
    
    {\bf Statistical Rejection Sampling Optimization}~\citep[RSO;][]{liu2023statistical} samples from the optimal policy using samples from the pretrained model $\pi_\textnormal{pre}(\vx)$, but \citet{liu2023statistical} also investigate an extension to SLiC-FH which normalizes the model's log-likelihoods using the pretrained model $\pi_\textnormal{pre}(\vx)$:
    
    \begin{equation}
        \mathcal{L}(\theta) = \E_{D(\vx_A, \vx_B, y)} \left[\left(
            \delta - \left(
                y \cdot \log\frac{\pi_\theta(\vx_A)\pi_\textnormal{pre}(\vx_B)}{\pi_\theta(\vx_B)\pi_\textnormal{pre}(\vx_A)}
              + (1 - y) \cdot \log\frac{\pi_\theta(\vx_B)\pi_\textnormal{pre}(\vx_A)}{\pi_\theta(\vx_A)\pi_\textnormal{pre}(\vx_B)}
            \right)
        \right)^+ \right].
    \end{equation}
    
    {\bf $\Psi$PO with identity mapping}~\citep[IPO;][]{azar2023general} derives a learning rule for a KL-controlled preference probability maximization problem 
    
    \begin{equation}
        \theta \gets \max_\theta \quad \E_{\substack{\vx_A \sim \pi_\theta(\vx) \\ \vx_B \sim \mu(\vx)}} \bigg[
            p^*(y; \vx_A, \vx_B)
        \bigg] - \tau \KL \left(\pi_\theta(\vx) \mid\mid \pi_\textnormal{pre}(\vx) \right)
    \end{equation}
    
    which has the form
    
    \begin{equation}
        \mathcal{L}(\theta) = \E_{D(\vx_A, \vx_B, y)} \left[\bigg(
            y \cdot \log\frac{\pi_\theta(\vx_A)\pi_\textnormal{pre}(\vx_B)}{\pi_\theta(\vx_B)\pi_\textnormal{pre}(\vx_A)}
          + (1 - y) \cdot \log\frac{\pi_\theta(\vx_B)\pi_\textnormal{pre}(\vx_A)}{\pi_\theta(\vx_A)\pi_\textnormal{pre}(\vx_B)} - \frac{\tau^{-1}}{2}
        \bigg)^2 \right].
    \end{equation}
    
    Practically speaking, the loss is similar to the one investigated in SLiC-FH (and RSO) but is constructed using a two-sided quadratic expression rather than a one-sided hinge expression. We examine the properties of IPO using our synthetic density estimation problem by finetuning the pretrained model $\pi_\textnormal{pre}(\vx)$ with IPO on pairwise preferences acquired from the implicit preference distribution defined in \autoref{eqn:synthetic-annotator}. We sweep over $\tau \in \{0.1, 0.5, 1.0\}$ to adapt the pretrained model with IPO. \autoref{fig:ipo} shows the regularizing effect of $\tau$ towards the pretrained model's density.
}

\subsection{Addressing annotator misspecification in a toy setting}
\label{sec:well-specified-annotator}

To account for annotator identity in the experiment described in Section~\ref{sec:annotator-misspecification}, we initialize two copies of $\pi_\textnormal{pre}(\vx)$ (adding Gaussian noise with standard deviation \num{1E-4} to break symmetries) to create a mixture model

\begin{equation}
    \pi_\theta(\vx) = \sum_{i=1}^2 \frac{w_i}{w_1 + w_2} \pi_{\theta}(\vx)
\end{equation}

and adjust our annotator behavior model to account for Annotator 1 and Annotator 2 so that 

\begin{equation}
\label{eqn:multi-annotator-model}
    p_\theta(y; \vx_A, \vx_B) = \sum_{i=1}^2 \frac{w_i}{w_1 + w_2} \cdot \frac{\pi_{\theta}(\vx_A)}{\pi_{\theta}(\vx_A) + \pi_{\theta}(\vx_B)}.
\end{equation}

We then train as usual with a binary cross-entropy loss on the comparison outcome variable $y$~(\autoref{fig:multi-annotator-well-specified}). Doing so allows to recover the correct marginal implicit preference distribution, even in the absence of annotator IDs in the preference data. This is particularly interesting in contexts where human annotators ``cluster'' in their preferences. In the coffee example presented in \autoref{sec:annotator-misspecification}, the two annotators may represent entire subpopulations of annotators.

\begin{figure}[b]
    \centering
    \includegraphics[height=12em]{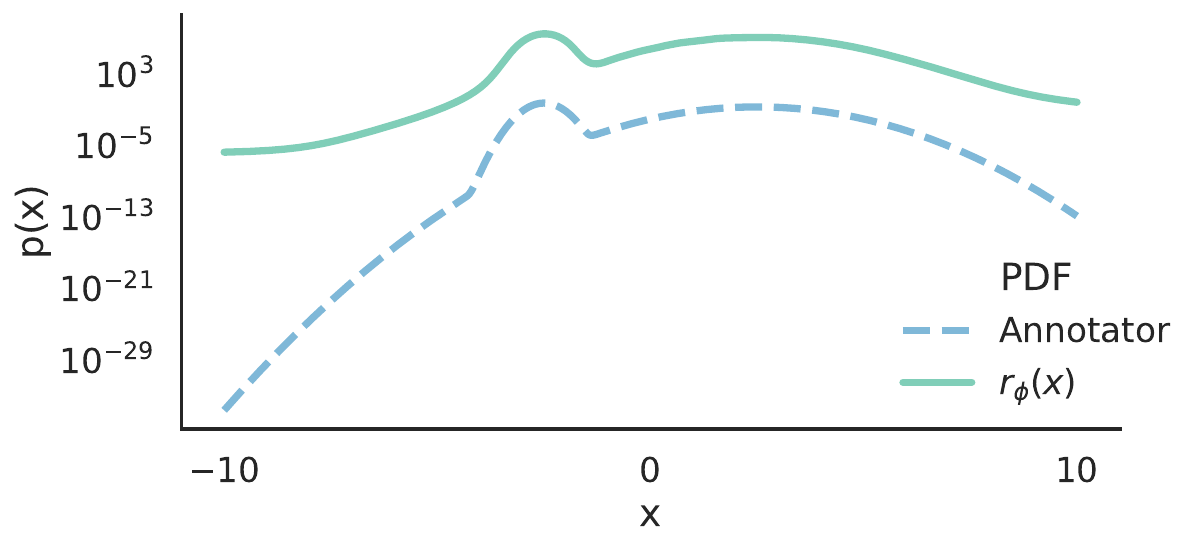}
    \caption{\label{fig:reward-training-log} A reward model trained on comparison outcomes sampled according to an implicit preference distribution (\autoref{eqn:synthetic-annotator}; dashed blue) converges to the annotator's unnormalized preference distribution (solid green), as indicated by the constant offset between the two curves.}
\end{figure}

\begin{figure}[p]
    \centering
    \includegraphics[height=12em]{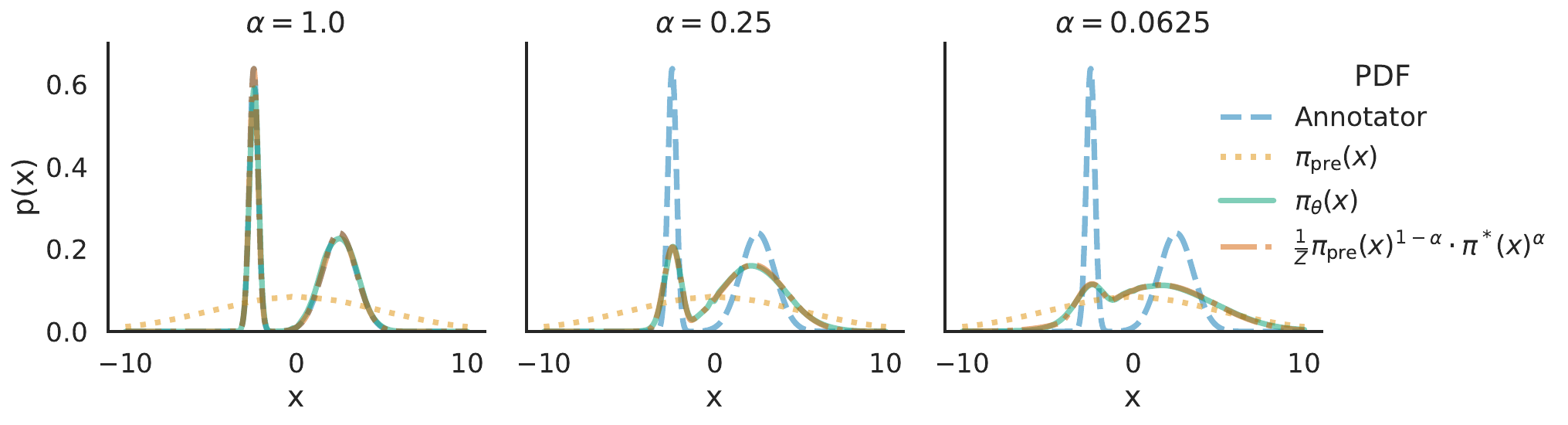}
    \caption{\label{fig:geometric-average} Under the Luce choice rule for the annotator, using a PBDE for the model with $f(\vx) = \alpha^{-1}$ and $g(\vx) = (1 - \alpha^{-1})\log\pi_\textnormal{pre}(\vx)$ to tune $\pi_\textnormal{pre}(\vx)$ (dotted orange) on pairwise preferences derived from the implicit preference distribution (dashed blue) results in a weighted geometric average between the initial model $\pi_\textnormal{pre}$ and the implicit preference distribution, as demonstrated by the agreement between the empirical (solid green) and theoretical (dash-dotted red) curves.}
\end{figure}

\begin{figure}[p]
    \centering
    \noindent
    \makebox[\textwidth]{\includegraphics[height=21em]{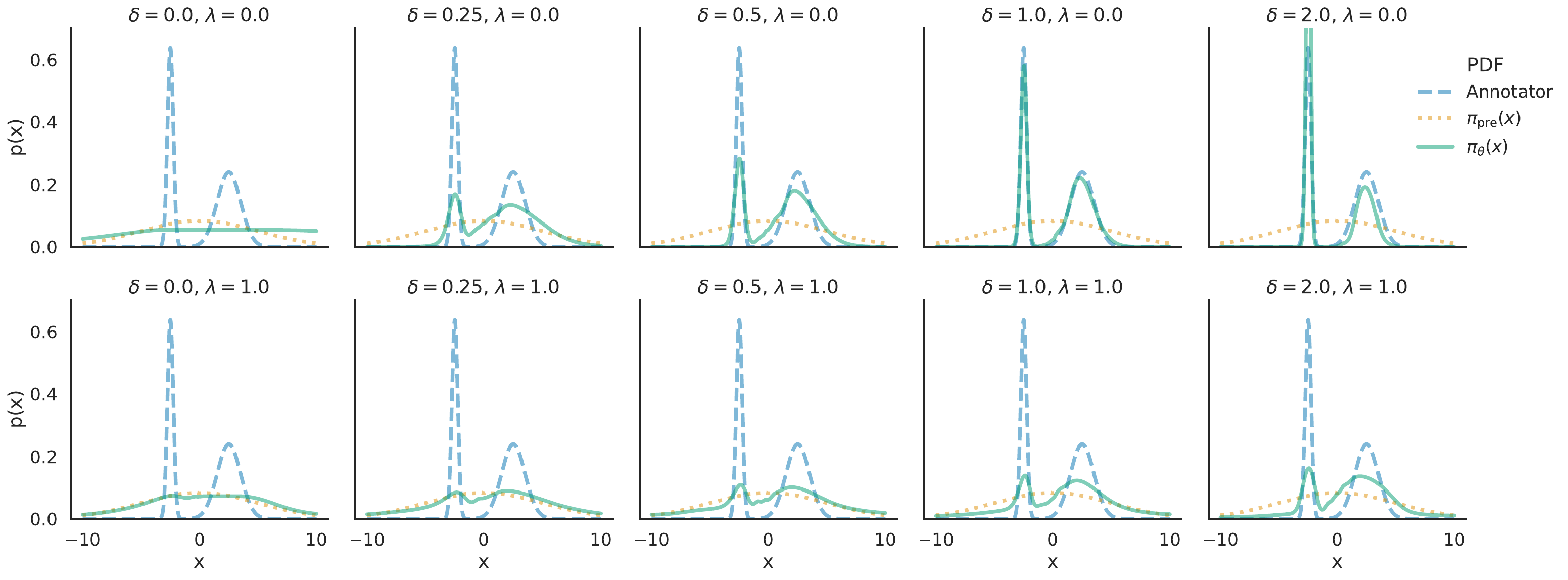}}
    \caption{\label{fig:slic-hf} Using SLiC-HF-direct to tune a generative model $\pi_\textnormal{pre}(\vx)$ (dotted orange) on pairwise preferences derived from the implicit preference distribution (dashed blue) results in different adapted models (solid green) depending on the choice of hyperparameters $\delta$ and $\lambda$.}
\end{figure}

\begin{figure}[p]
    \centering
    \includegraphics[height=12em]{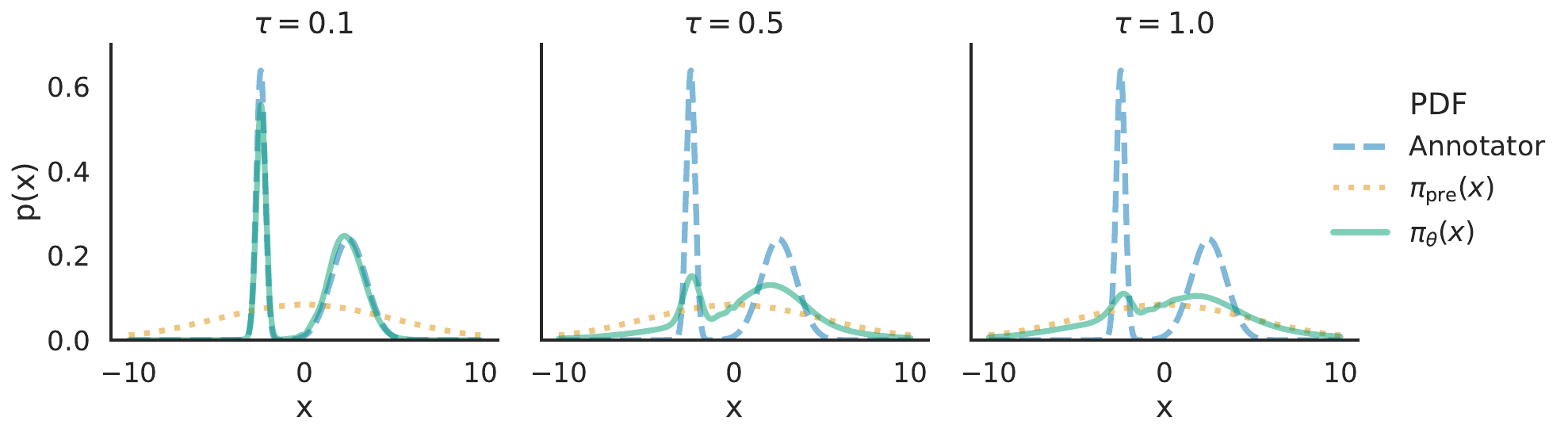}
    \caption{\label{fig:ipo} Using IPO to tune a generative model $\pi_\textnormal{pre}(\vx)$ (dotted orange) on pairwise preferences derived from the implicit preference distribution (dashed blue) results in different adapted models (solid green) depending on the choice of the $\tau$ hyperparameter.}
\end{figure}

\begin{figure}[p]
    \centering
    \includegraphics[height=12em]{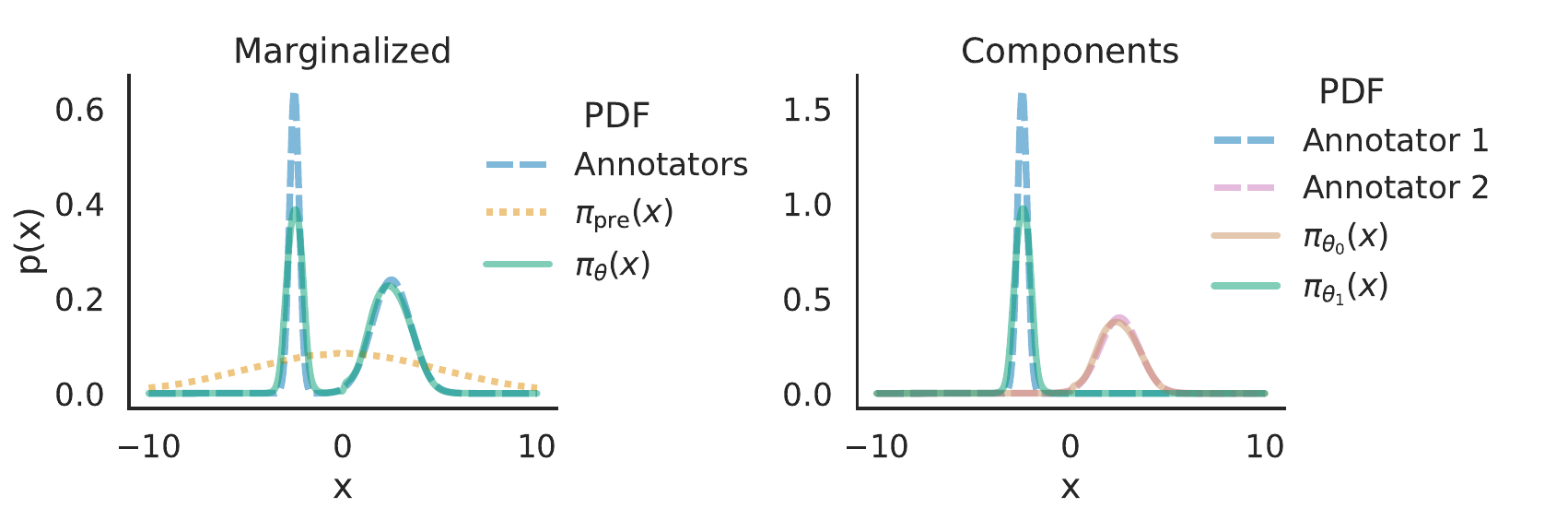}
    \caption{\label{fig:multi-annotator-well-specified} Choosing the correct annotator behavior model (\autoref{eqn:multi-annotator-model}) allows to recover the correct marginal implicit preference distribution (left) and individual implicit preference distributions (right), even without annotator IDs in the preference data.}
\end{figure}

\end{document}